\documentclass{article}

\usepackage[preprint]{neurips_2026}

\usepackage[utf8]{inputenc} 
\usepackage[T1]{fontenc}    
\usepackage{hyperref}       
\usepackage{url}            
\usepackage{booktabs}       
\usepackage{amsfonts}       
\usepackage{nicefrac}       
\usepackage{microtype}      
\usepackage{xcolor}         

\usepackage{algpseudocode}
\usepackage{algorithm}
\usepackage{algorithmicx}
\usepackage{multirow}
\usepackage{graphicx}
\usepackage{amsmath}
\usepackage{amssymb}
\usepackage{amsthm}
\usepackage{amsfonts}
\usepackage{wrapfig}

\newtheorem{theorem}{Theorem}[section]
\newtheorem{lemma}[theorem]{Lemma}
\newtheorem{proposition}[theorem]{Proposition}

\title{Safety Alignment as Continual Learning: Mitigating the Alignment Tax via Orthogonal Gradient Projection}

\author{
  Guanglong Sun \thanks{First author.}\\
  School of Life Sciences, IDG/McGovern Institute for Brain Research \\
  Tsinghua University, Beijing, China \\
  \texttt{sgl23@mails.tsinghua.edu.cn} \\
  \AND
  Siyuan Zhang \thanks{First author.} \\
  Dept. of Comp. Sci. and Tech., Institute for AI, Tsinghua-Bosch Joint ML Center, THBI Lab, BNRist Center \\
  Tsinghua University, Beijing, China \\
  \texttt{zhang-sy24@mails.tsinghua.edu.cn} \\
  \And
  Liyuan Wang \\
  Department of Psychological and Cognitive Sciences \\ 
  Tsinghua University, Beijing, China \\
  \texttt{liyuanwang@tsinghua.edu.cn} \\
  \And
  Jun Zhu \\
  Dept. of Comp. Sci. and Tech., Institute for AI, Tsinghua-Bosch Joint ML Center, THBI Lab, BNRist Center \\
  Tsinghua University, Beijing, China \\
  \texttt{dcszj@tsinghua.edu.cn} \\
  \And
  Hang Su \thanks{Corresponding author.} \\
  Dept. of Comp. Sci. and Tech., Institute for AI, Tsinghua-Bosch Joint ML Center, THBI Lab, BNRist Center \\
  Tsinghua University, Beijing, China \\
  \texttt{suhangss@mail.tsinghua.edu.cn} \\
  \And
  Yi Zhong \thanks{Corresponding author.}\\
  School of Life Sciences, IDG/McGovern Institute for Brain Research \\
  Tsinghua University, Beijing, China \\
  \texttt{zhongyithu@tsinghua.edu.cn} \\
}

\begin{document}

\maketitle

\begin{abstract}
Safety post-training can improve the harmfulness and policy compliance of Large Language Models (LLMs), but it may also reduce general utility, a phenomenon often described as the \emph{alignment tax}. We study this trade-off through the lens of continual learning: sequential alignment stages expose the model to shifted data distributions and objectives, and their gradients may interfere with directions that support previously acquired general capabilities. This view does not claim that all alignment degradation has a single cause; rather, it provides a useful first-order mechanism for mitigating one important source of capability regression. We propose \textbf{O}rthogonal \textbf{G}radient \textbf{P}rojection for \textbf{S}afety \textbf{A}lignment (\textbf{OGPSA}), a lightweight update rule that estimates a low-rank reference subspace from gradients on a small set of general-capability data and removes from each safety gradient the component lying in this subspace. The resulting update is the steepest local safety-descent direction subject to first-order preservation constraints on the reference objectives. OGPSA is compatible with standard post-training pipelines and avoids large-scale replay, although it introduces periodic reference-gradient computation. Across Supervised Fine-Tuning (SFT), Direct Preference Optimization (DPO), and sequential SFT$\rightarrow$DPO settings, OGPSA improves the observed safety--utility trade-off over standard baselines. Under the sequential SFT$\rightarrow$DPO pipeline, the average performance gain increases from 33.98\% to 42.74\% on Qwen2.5-7B-Instruct and from 19.74\% to 32.98\% on Llama3.1-8B-Instruct. We have open sourced our code at \url{https://github.com/SunGL001/OGPSA}.
\end{abstract}

\begin{figure}[h]
    \centering
    \includegraphics[width=0.75\linewidth]{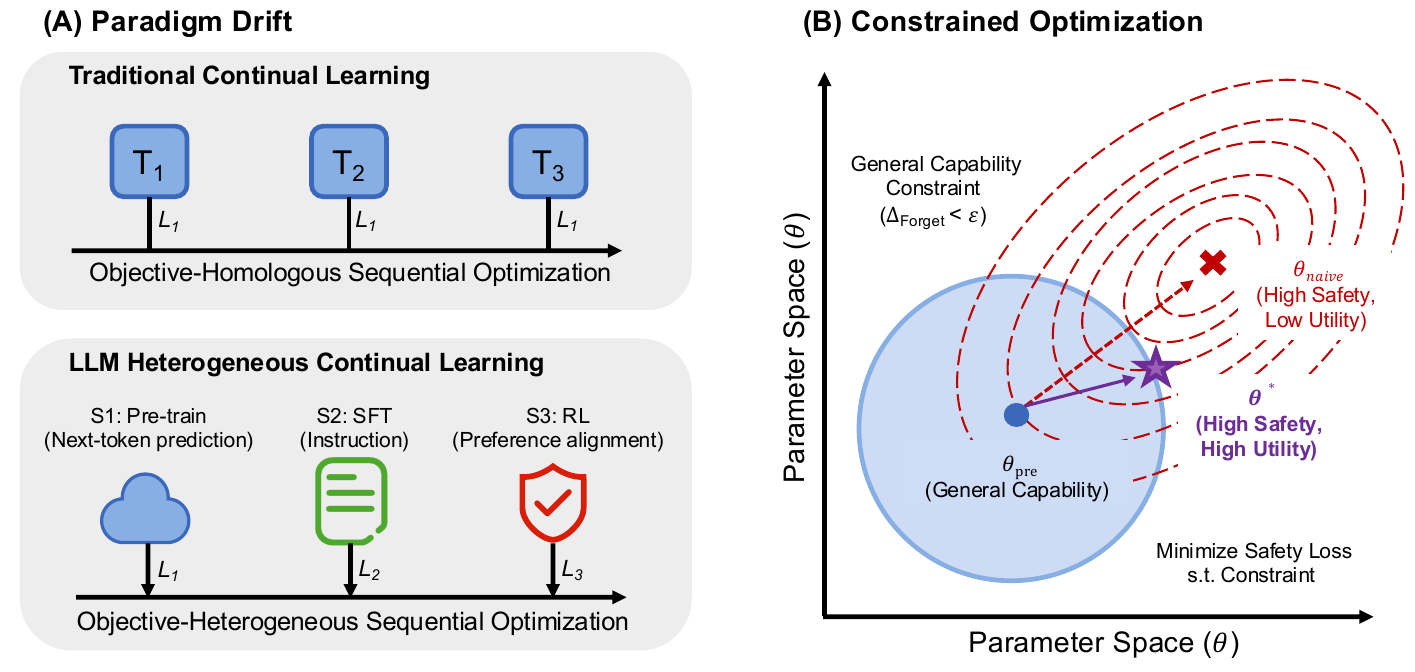}
    \vspace{-.1cm}
    \caption{Conceptual framework for reframing LLM Safety Alignment as a Constrained Continual Learning Problem.
    (A) Comparison of traditional CL and LLM Heterogeneous CL. 
    (B) Safety alignment under anti-forgetting constraints. 
    }
    \label{fig:diagram}
    \vspace{-5ex}
\end{figure}

\section{Introduction}\label{sec:intro}

Large Language Models (LLMs) have emerged as highly capable general-purpose systems~\citep{achiam2023gpt, bai2023qwen, dubey2024llama}, achieving strong performance in complex reasoning~\citep{cobbe2021training,hendrycks2measuring}, code generation~\citep{chen2021evaluating,nam2024using}, and open-ended content synthesis \citep{sudhakaran2023mariogpt,kantharaj2022opencqa,liu2025scientific}. 
However, capability alone does not imply safe or aligned behavior: without explicit alignment, LLMs may generate toxic or biased outputs, produce persuasive misinformation, or provide assistance that enables harmful actions~\citep{dongrobust,liu2023jailbreaking,wangdecodingtrust}. 
As a result, safety and reliability have become central requirements for deployment, often summarized by the desiderata of being \textbf{helpful, honest, and harmless} (HHH)~\citep{ouyang2022training}.

In practice, safety alignment is typically implemented via a dedicated \emph{post-training} pipeline~\citep{wang2024comprehensive}. 
After large-scale pre-training endows broad general capabilities, the model is further optimized to follow human intent and safety constraints using \emph{Supervised Fine-Tuning (SFT)}~\citep{bianchisafety,choi2024safety} and/or preference-based optimization such as \emph{RLHF}~\citep{ouyang2022training,daisafe} or \emph{Direct Preference Optimization (DPO)} \citep{rafailov2023direct}. 
While effective at reducing harmful behaviors, this sequential optimization frequently incurs an \textbf{alignment tax}: improving safety can lead to measurable regressions in general capabilities (e.g., truthfulness or general helpfulness, see naive tuning in Fig.~\ref{fig:main_results})~\citep{ouyang2022training,askell2021general,noukhovitch2023language}. 
One important mechanism is \emph{parameter interference across stages}: updates induced by safety objectives can overlap with directions that support pre-trained competencies, yielding capability loss even as safety improves~\citep{kirkunderstanding,lin2024mitigating}. We do not claim that this mechanism exhausts all sources of alignment tax; data curation, objective misspecification, refusal calibration, optimizer settings, and benchmark sensitivity can also contribute. Our focus is the gradient-interference component because it admits a simple, local intervention.

Recent work attempts to mitigate this trade-off by \emph{anchoring} post-training updates to the pre-trained model through two common mechanisms. 
First, \emph{rehearsal/replay} interleaves a subset of general data or auxiliary pre-training-style objectives during alignment (e.g., \textsc{PPO-ptx} in InstructGPT~\citep{ouyang2022training}), which can reduce regressions but increases compute and introduces additional scheduling and mixture hyperparameters~\citep{lin2024mitigating}. 
Second, \emph{proximity regularization} constrains the aligned policy to remain close to a reference model, most prominently via KL penalties in PPO-style RLHF and related preference-optimization objectives \citep{papineni2002bleu,yangadamerging,huang2021continual}. 
Although these techniques often improve capability retention, they can introduce additional burdens, including elevated data requirements, pipeline complexity, and sensitivity to hyperparameters such as the replay ratio or KL penalty~\citep{zhangstair,lin2024mitigating}.
More fundamentally, they act as \emph{soft constraints}: they shrink the overall update or penalize distributional deviation, but do not explicitly remove the components of the safety update that interfere with capability-preserving directions in parameter space. 
Consequently, safety gradients may still project onto subspaces that encode pre-trained competencies, leading to \emph{(catastrophic) forgetting}---a measurable drop in performance on previously acquired general skills after alignment.

\begin{wrapfigure}{r}{0.50\textwidth}
    \centering
    \vspace{-0.4cm}
    \includegraphics[width=0.99\linewidth]{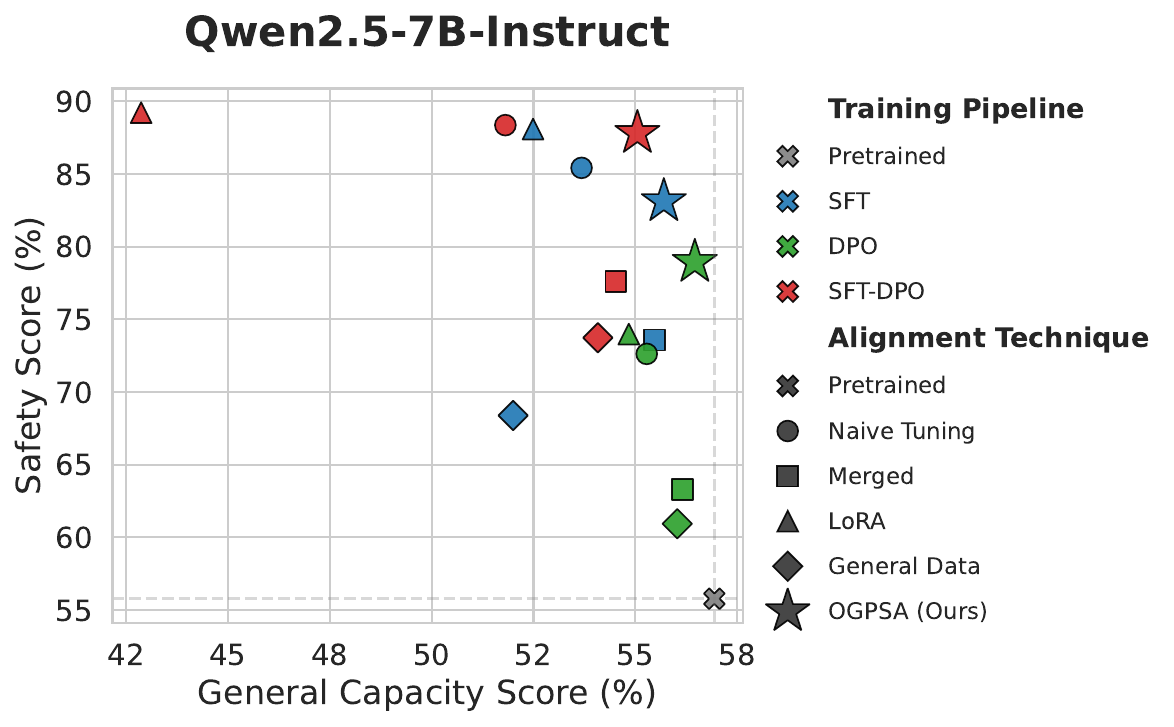}
    \vspace{-0.4cm}
    \caption{Overall performance of alignment strategies on Qwen2.5-7B-Instruct.
    We report the aggregate \textit{Safety Score} (avg. of 4 datasets) and \textit{General Capability Score} (avg. of 6 datasets); see Table~\ref{tab:main} for details and Appendix~\ref{fig:llama_results} for Llama3.1-8B results. 
    }
    \label{fig:main_results}
    \vspace{-0.3cm}
\end{wrapfigure}


To move beyond heuristic anchoring, we interpret a substantial part of the alignment tax as catastrophic-forgetting-like interference under \emph{objective-heterogeneous} sequential optimization (Fig.~\ref{fig:diagram}A).
This yields a key observation specific to modern LLM alignment: post-training is inherently a \textbf{Continual Learning (CL)} process, where the model is updated across multiple training stages (e.g., SFT followed by preference optimization) that induce heterogeneous shifts in \emph{both} data distributions and optimization objectives \citep{ouyang2022training,lin2024mitigating}.
From the perspective of CL, safety-induced gradients may overlap with parameter directions that are important for general capabilities. This fundamental conflict mirrors the classic \emph{stability--plasticity dilemma}~\citep{wang2024comprehensive_cl,zhou2024continual}: effective alignment demands the \emph{plasticity} to acquire new safety constraints without compromising the \emph{stability} of pre-trained general knowledge (Fig.~\ref{fig:diagram}B).
Accordingly, the core challenge is not merely to regularize the update magnitude, but to design updates that satisfy safety objectives \emph{while explicitly minimizing interference} with the parameter subspaces that support general capabilities.

To bridge this gap, we introduce a first-order constrained optimization view of safety post-training. We propose \textbf{O}rthogonal \textbf{G}radient \textbf{P}rojection for \textbf{S}afety \textbf{A}lignment (\textbf{OGPSA}, Fig.~\ref{fig:projection}), a lightweight geometric procedure that reduces directional interference between safety-driven updates and a reference subspace associated with general capabilities. OGPSA uses a small, representative subset of general data to estimate a low-rank gradient subspace. During alignment (e.g., via SFT or DPO), the method projects each safety gradient onto the orthogonal complement of this subspace. This operation removes the component of the safety update that would increase the selected reference losses to first order, while keeping the remaining component available for safety optimization. Empirically, OGPSA improves the observed safety--capability trade-off relative to standard baselines across multiple models, benchmarks, and alignment stages (Fig.~\ref{fig:main_results}, Table~\ref{tab:main}).

Our main contributions are summarized as follows:
\vspace{-.3cm}
\begin{itemize}
    \item We formulate safety post-training as an objective-heterogeneous continual learning problem and identify gradient interference as a concrete, testable mechanism behind part of the alignment tax.
    \item We propose OGPSA, a plug-and-play gradient projection rule that updates along the orthogonal complement of a low-rank general-capability reference subspace, with a first-order feasible-descent characterization.
    \item We evaluate OGPSA across model families and alignment strategies, showing consistent improvements in the empirical safety--utility trade-off over standard baselines while reporting the limitations of the first-order approximation.
\end{itemize}

\section{Related Work}\label{sec:related}

\paragraph{LLM Safety Alignment.}

Research on safety alignment for LLMs primarily centers on two perspectives. 
The first line of work involves \textit{test-time intervention}, which introduces external safety guards to identify unsafe responses~\citep{inan2023llama, leeharmaug,jaech2024openai,wang2024self} or actively adjusts the output distribution via model steering~\citep{kowsher2025propulsion,rebedea2025guardrails,wu2025automating}. 
However, these approaches invariably incur additional inference latency and increase system complexity. 
The second perspective focuses on \textit{post-training} the model for safety awareness. 
Nevertheless, simply training the model on safety data often leads to a degradation in general capabilities~\citep{ouyang2022training,askell2021general,noukhovitch2023language}. 
Existing methods attempt to mitigate this by introducing replay data to preserve original abilities or designing task-specific pipelines~\citep{ouyang2022training,lin2024mitigating,zhangstair}. 
Yet, the former solution significantly increases training computational costs, while the latter complicates the training pipeline and lacks universality across different training pipeline~\citep{wang2024comprehensive}. 
Moreover, both solutions are largely heuristic, lacking theoretical guarantees for the training outcomes~\citep{lin2024mitigating}. 
In contrast, our method adapts the gradient-projection principle from continual learning to objective-heterogeneous LLM safety alignment. It provides a first-order characterization of the safety-descent direction under reference-preservation constraints, rather than a global guarantee of safety or capability preservation. Its implementation is lightweight relative to large-scale replay, while still requiring periodic reference-gradient computation.

\vspace{-.3cm}

\paragraph{Continual Learning.} Continual Learning (CL) aims to enable models to learn sequential tasks without suffering from catastrophic forgetting, addressing the classic stability-plasticity dilemma~\citep{wang2024comprehensive_cl,zhou2024continual}. Traditional CL methods generally fall into three categories:
(1) \textit{Regularization-based methods} which impose penalty terms on important parameters to restrict their changes (e.g., EWC~\citep{kirkpatrick2017overcoming}, LwF~\citep{li2017learning}) ;
(2) \textit{Replay-based methods} which retain a buffer of historical data for rehearsal (e.g., GEM~\citep{lopez2017gradient}, DER~\citep{buzzega2020dark}); and
(3) \textit{Optimization-based methods} which decouple parameter updates at the gradient level to facilitate the learning of new tasks while effectively preserving pre-existing knowledge~\citep{lu2024visual,qiao2025gradient,lin2022trgp}.
More recently, advanced CL methods have shifted toward leveraging pretrained models via parameter-efficient tuning~\citep{wang2022learning,wu2025sdlora} and representation alignment~\citep{zhang2023slca,mcdonnell2024ranpac} to achieve superior rehearsal-free performance. Since safety alignment shares with CL the goal of learning new behavior without erasing useful prior behavior, it can benefit from CL concepts such as stability--plasticity trade-offs and gradient interference. 
However, while effective in standard settings, most existing CL research assumes a sequence of tasks with a \textit{homogeneous} optimization objective (e.g., a sequence of classification tasks) where only the data distribution shifts. In contrast, the LLM training lifecycle involves a \textit{multi-stage} process where both the data distribution and the optimization objective shift drastically~\citep{ouyang2022training,lin2024mitigating}.
Consequently, directly applying traditional CL methods is non-trivial: the reference behavior to preserve is broad and multi-domain, while the new objective may be likelihood-based, preference-based, or a sequence of both. OGPSA is tailored to this setting by constructing the preserved subspace from general-capability reference gradients and applying the projection inside standard SFT/DPO-style updates.

\paragraph{Positioning relative to gradient-projection CL.}
Unlike traditional projection-based CL (e.g., GEM~\citep{gem}, GPM~\citep{gpm}) that protects specific prior tasks under homogeneous losses, OGPSA is explicitly designed for objective heterogeneity. It preserves broad LLM capabilities across diverse alignment stages (SFT, DPO, SFT$\rightarrow$DPO).
Thus, our contribution lies not in the projection operator itself, but in its tailored formulation, subspace construction, and validation for safety alignment under objective heterogeneity.



\section{Preliminaries}
\label{sec:prelim}

We study \emph{sequential post-training} for safety alignment and its tendency to reduce general utility (the \emph{alignment tax}).
We first define the alignment tax at the evaluation level, then introduce a differentiable reference-loss surrogate that yields a tractable \emph{first-order} preservation constraint.


\subsection{Sequential Safety Alignment and the Alignment Tax}
\label{sec:prelim_tax}

Let $\theta_{\mathrm{pre}}$ denote the parameters of a pre-trained LLM trained on a broad next-token objective.
Safety alignment then applies one or more post-training stages (e.g., SFT, DPO~\citep{rafailov2023direct}), producing $\theta_{\mathrm{safe}}$.
While these stages can improve safety behavior, they may degrade general utility.

Let $\Phi(\theta;\mathcal{D}_{\mathrm{eval}})$ be an evaluation metric on a general evaluation suite $\mathcal{D}_{\mathrm{eval}}$.
We define the alignment tax as
\begin{equation}
\label{eq:tax_def}
\Delta_{\mathrm{tax}}
=
\Phi(\theta_{\mathrm{pre}};\mathcal{D}_{\mathrm{eval}})
-
\Phi(\theta_{\mathrm{safe}};\mathcal{D}_{\mathrm{eval}}).
\end{equation}
In practice, directly constraining $\Phi$ during training is difficult (often non-differentiable or expensive) so we introduce a differentiable \emph{capability surrogate}.

\subsection{Heterogeneous Continual Learning Perspective}
\label{sec:prelim_cl}

We model safety alignment as \emph{heterogeneous continual learning} (HCL) because the post-training pipeline is \emph{sequential} and each stage typically changes both the \emph{data distribution} and the \emph{objective} (Fig.~\ref{fig:diagram}A)~\citep{ouyang2022training,lin2024mitigating}.
Starting from a pre-trained model $\theta_{\mathrm{pre}}$ learned on a broad pre-training distribution, alignment proceeds
through stages such as instruction tuning and preference optimization, 
e.g., SFT and DPO on safety dataset $\mathcal{D}_{\mathrm{safe}}$.
Importantly, these stages do not merely introduce new samples; they can also alter the risk functional---for example, from likelihood-based supervision to preference/ranking-based optimization---which can substantially reshape gradient geometry.

Consider a generic alignment stage that optimizes a safety-related objective $\mathcal{L}_{\mathrm{safe}}(\theta)$
(e.g., SFT or the DPO~\citep{rafailov2023direct} loss). A standard gradient update takes the form
\begin{equation}
\label{eq:naive_update}
\theta \leftarrow \theta - \eta\, g_{\mathrm{safe}},
\qquad
g_{\mathrm{safe}} := \nabla_\theta \mathcal{L}_{\mathrm{safe}}(\theta),
\end{equation}
where $\eta$ is learning rate. Under HCL, one source of alignment tax can be interpreted as continual-learning-style interference: due to distribution and objective shifts across stages, $g_{\mathrm{safe}}$ may contain components along parameter directions that are also important for general capabilities acquired during pre-training. Consequently, the naive update in Eq.~\eqref{eq:naive_update} can improve safety behavior while perturbing capability-supporting directions, yielding degradation in general utility (Fig.~\ref{fig:diagram}B).

\subsection{First-Order Capability Preservation via Gradient Orthogonality}
\label{sec:prelim_first_order}

Motivated by evidence that fine-tuning often operates in low-dimensional effective subspaces~\citep{aghajanyan2021intrinsic,zhou2023lima,ying2026truthfulness}, we approximate capability preservation by estimating a low-rank gradient subspace from a small \emph{reference} collection of general-purpose data. Let $\{\mathcal{D}^{(i)}_{\mathrm{ref}}\}_{i=1}^{M}$ be $M$ small
datasets, each targeting a facet of general ability (e.g., reasoning, coding, truthfulness). Let
$\mathcal{L}^{(i)}_{\mathrm{ref}}(\theta)$ denote a differentiable loss on $\mathcal{D}^{(i)}_{\mathrm{ref}}$ (e.g.,
cross-entropy), and define the corresponding reference gradients
\begin{equation}
\label{eq:ref_grad}
g^{(i)}(\theta) := \nabla_\theta \mathcal{L}^{(i)}_{\mathrm{ref}}(\theta), \qquad i=1,\dots,M.
\end{equation}

Consider a small parameter update $\Delta\theta$. A first-order Taylor expansion gives
\begin{equation}
\label{eq:taylor}
\mathcal{L}^{(i)}_{\mathrm{ref}}(\theta+\Delta\theta)
\approx
\mathcal{L}^{(i)}_{\mathrm{ref}}(\theta)
+
\langle g^{(i)}(\theta), \Delta\theta\rangle.
\end{equation}
Thus, a sufficient condition to preserve reference capability $i$ \emph{to first order} is
$\langle g^{(i)}(\theta), \Delta\theta\rangle = 0$. Enforcing this for all $i$ yields the linear constraints
\begin{equation}
\label{eq:linear_constraints}
\langle g^{(i)}(\theta), \Delta\theta\rangle = 0,\quad i=1,\dots,M.
\end{equation}

We summarize these directions via the \emph{general-capability subspace}
\begin{equation}
\label{eq:subspace_def}
\mathcal{S}_{\mathrm{gen}}(\theta) := \mathrm{span}\{g^{(1)}(\theta),\dots,g^{(M)}(\theta)\}.
\end{equation}
Equation~\eqref{eq:linear_constraints} is equivalent to requiring $\Delta\theta \in \mathcal{S}_{\mathrm{gen}}(\theta)^\perp$.
This yields the first-order update rule behind our method: \emph{remove from the safety update the component that lies in the local general-capability reference subspace.} The next section operationalizes this principle by maintaining a low-rank basis
for $\mathcal{S}_{\mathrm{gen}}(\theta)$ and projecting each safety gradient accordingly, resulting in an efficient
plug-and-play update rule.

\vspace{-.3cm}
\section{Methodology}
\label{sec:method}
\vspace{-.3cm}
In this section, we present \textbf{O}rthogonal \textbf{G}radient \textbf{P}rojection for \textbf{S}afety \textbf{A}lignment (\textbf{OGPSA}, Fig.~\ref{fig:projection}). OGPSA is a plug-and-play update rule that reduces first-order gradient interference between safety optimization and selected general-capability reference objectives. It estimates a low-rank reference subspace from general-capability gradients and projects each safety gradient onto the orthogonal complement of this subspace before updating parameters.

\vspace{-.2cm}
\subsection{Overview}
\label{sec:method_overview}

\begin{wrapfigure}{r}{0.5\textwidth}
    \centering
    \vspace{-1.2cm}
    \includegraphics[width=0.85\linewidth]{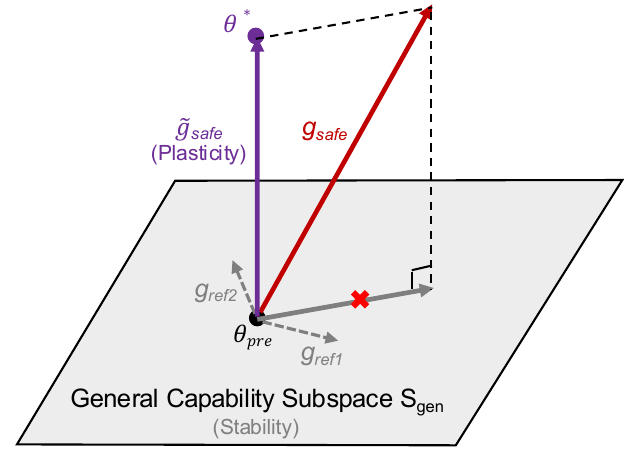}
    \vspace{-0.1cm}
    \caption{Schematic illustration of the proposed {O}rthogonal \textbf{G}radient \textbf{P}rojection for \textbf{S}afety \textbf{A}lignment (OGPSA) framework.
    $g_{\text{ref1}}, g_{\text{ref2}}$: Reference gradients computed from representative general capability datasets (e.g., helpfulness, truthfulness). 
    $g_{\text{safe}}$: The standard gradient derived from the safety alignment objective. 
    $\tilde{g}_{\text{safe}}$: The projected safety gradient obtained by projecting $g_{\text{safe}}$ onto the orthogonal space of the general capability subspace.
    }
    \label{fig:projection}
    \vspace{-0.8cm}
\end{wrapfigure}

Modern alignment is typically performed \emph{sequentially} after pre-training, and often across multiple stages with
shifting objectives and data distributions (e.g., likelihood-based SFT on $\mathcal{D}_{\mathrm{sft}}$ followed by
preference optimization on $\mathcal{D}_{\mathrm{safe}}$). This setting is naturally viewed as
\emph{heterogeneous continual learning}, where both the task objective and the training distribution change over time.
Consequently, a naive safety update through Eq.~\ref{eq:naive_update} can interfere with parameter directions that are important for broad utility, inducing continual-learning-style capability regression.

OGPSA constrains each safety step to avoid directions that locally encode general capability. Concretely, we maintain a
low-rank \emph{general-capability subspace} $\mathcal{S}_{\mathrm{gen}}(\theta)$ estimated from reference gradients
$\{g^{(i)}(\theta)\}_{i=1}^{M}$ computed on small, diverse general-capability datasets. We then update parameters using
only the component of the safety gradient orthogonal to this subspace:
\begin{equation}
\label{eq:proj_rule_method_overview}
\Delta\theta = -\eta\, P_{\mathcal{S}_{\mathrm{gen}}(\theta)^\perp}\!\big(g_{\mathrm{safe}}(\theta)\big),
\qquad
\theta \leftarrow \theta + \Delta\theta.
\end{equation}
Equivalently, letting $U$ denote an orthonormal basis of $\mathcal{S}_{\mathrm{gen}}(\theta)$ (rank $M'$), the projected
direction is $\tilde g_{\mathrm{safe}} = g_{\mathrm{safe}} - U(U^\top g_{\mathrm{safe}})$, and we take
$\theta \leftarrow \theta - \eta\, \tilde g_{\mathrm{safe}}$.

The subspace is refreshed periodically (every $K$ steps) using inexpensive reference mini-batches, and the projection
requires only a small number of inner products for low rank $M'$. As a result, OGPSA can be applied across alignment stages (e.g., SFT/DPO/RLHF-style updates) without modifying the underlying objective, while adding only periodic reference-gradient computation and low-rank projection operations.

We next describe dynamic subspace construction, the projected update rule with its first-order justification, and the
resulting algorithm and computational overhead (see Fig.~\ref{fig:projection} and Algorithm~\ref{alg:ogpsa}).

\subsection{General-Capability Subspace Estimation}
\label{sec:method_subspace_est}

Directly constraining general-utility metrics during training is typically infeasible because such metrics are often non-differentiable, benchmark-specific, or too expensive to evaluate at every step. Instead, we approximate capability
preservation using a small set of differentiable \emph{reference} objectives~\citep{aghajanyan2021intrinsic,zhou2023lima}. Let
$\{\mathcal{D}^{(i)}_{\mathrm{ref}}\}_{i=1}^{M}$ be $M$ small datasets, each targeting one facet of general capability
(e.g., reasoning, coding, truthfulness). For each dataset, we define a differentiable loss
$\mathcal{L}^{(i)}_{\mathrm{ref}}(\theta)$ (e.g., cross-entropy) and its gradient
\begin{equation}
\label{eq:ref_grad_method}
g^{(i)}(\theta) := \nabla_\theta \mathcal{L}^{(i)}_{\mathrm{ref}}(\theta), \qquad i=1,\dots,M.
\end{equation}
We define the \emph{general-capability subspace} as the span of these gradients:
\begin{equation}
\label{eq:subspace_def_method}
\mathcal{S}_{\mathrm{gen}}(\theta) := \mathrm{span}\{g^{(1)}(\theta),\dots,g^{(M)}(\theta)\}.
\end{equation}

\paragraph{Dynamic, low-rank basis.}

Since the local geometry can shift as training progresses, we update the subspace periodically. 
Every $K$ steps (i.e., at step $\tau$), we compute $M$ reference gradients on mini-batches $B^{(i)}\sim \mathcal{D}^{(i)}_{\mathrm{ref}}$ and construct an orthonormal basis $U_\tau=[u_{1},\dots,u_{M'}]\in\mathbb{R}^{d\times M'}$ for $\mathcal{S}_{\mathrm{gen}}(\theta_\tau)$. $M'$ denotes the rank of the estimated subspace, where $M' \le M$ accounts for the potential removal of linearly dependent directions. 
We employ the Gram--Schmidt process~\citep{bjorck1994numerics,leon2013gram} with a threshold $\delta$ to filter out redundancy:


\vspace{-.5cm}

\begin{align}
\label{eq:gs1}
u_{\tau,1} &= \frac{g_{\tau}^{(1)}}{\|g_{\tau}^{(1)}\|+\epsilon},\\
\label{eq:gsk}
    v_{\tau,k} &= g_{\tau}^{(k)} - \sum_{j=1}^{k-1}\langle g_{\tau}^{(k)}, u_{\tau,j}\rangle u_{\tau,j}, \text{ and }
    u_{\tau,k} = \frac{v_{\tau,k}}{\|v_{\tau,k}\|+\epsilon}\quad \text{if }\|v_{\tau,k}\|\ge \delta,
\end{align}
discarding nearly collinear directions when $\|v_{\tau,k}\|<\delta$.


\subsection{Projected Safety Optimization}
\label{sec:method_proj_update}

At training iteration $t$, let $g_{\mathrm{safe}} := \nabla_\theta \mathcal{L}_{\mathrm{safe}}(\theta_t)$ denote the safety
gradient. OGPSA maintains a (lagged) orthonormal basis $U_\tau=[u_{\tau,1},\dots,u_{\tau,M'}]\in\mathbb{R}^{d\times M'}$
for the current general-capability subspace $\mathcal{S}_{\mathrm{gen}}(\theta)\approx \mathrm{span}\{g^{(i)}(\theta)\}_{i=1}^M$,
refreshed every $K$ steps (so $\tau=\lfloor t/K\rfloor$). We remove the components of $g_{\mathrm{safe}}$ that lie in
$\mathcal{S}_{\mathrm{gen}}(\theta)$ by projecting onto its orthogonal complement:
\begin{equation}
\label{eq:proj_grad}
\tilde{g}_{\mathrm{safe}}
=
g_{\mathrm{safe}} - U_\tau(U_\tau^\top g_{\mathrm{safe}})
=
g_{\mathrm{safe}} - \sum_{j=1}^{M'} \langle g_{\mathrm{safe}}, u_{\tau,j}\rangle u_{\tau,j}.
\end{equation}
We then perform the projected update
\begin{equation}
\label{eq:update_final}
\theta_{t+1} \leftarrow \theta_t - \eta\, \tilde{g}_{\mathrm{safe}}.
\end{equation}
Intuitively, \eqref{eq:proj_grad}--\eqref{eq:update_final} make each projected safety step lie in $\mathcal{S}_{\mathrm{gen}}(\theta)^\perp$ up to refresh lag and stochastic gradient noise, thereby reducing first-order interference with the selected reference directions.

\begin{table*}[t]
\centering
\caption{Comparative evaluation of safety alignment and general capability retention. We compare our method (OGPSA) against standard baselines (SFT, DPO~\citep{rafailov2023direct}, SFT-DPO) and mitigation strategies (Merge, LoRA, Data Mixing) across Llama3.1-8B-Instruct and Qwen2.5-7B-Instruct Model. 
}
\resizebox{1.00\textwidth}{!}{
    \begin{tabular}{lccccccccccc}
        \toprule
\multirow{2}{*}{\textbf{Model}}
         & \multicolumn{4}{c}{\textbf{Safety ($\uparrow$)}} & \multicolumn{3}{c}{\textbf{Truthful ($\uparrow$)}} & \multicolumn{2}{c}{\textbf{Helpful ($\uparrow$)}} & \textbf{Robustness ($\uparrow$)} &\multirow{2}{*}{\textbf{Avg. Gain ($\uparrow$)}} \\
        \cmidrule(lr){2-5} \cmidrule(lr){6-8} \cmidrule(lr){9-10} \cmidrule(lr){11-11}
        & XSTest & WildChat & Stereotype & StrongReject & SimpleQA	& GPQA &MMLU & IFEval & HHH & AdvGLUE \\    
        \midrule
        \multicolumn{11}{c}{\textbf{Llama3.1-8B-Instruct Model}} \\
        \midrule
        Instruct Baseline  &88.50	&15.80	&89.85	&56.59	&1.90	&22.90	&71.07	&67.28	&86.61	&46.30 &- \\
        \midrule
        \textbf{SFT}     &83.00	&\textbf{50.00}	&89.08	&\textbf{97.27}	&0.72	&24.58	&70.00	&57.12	&87.89	&41.92 &\underline{20.19} \\
        + Merge &79.00	&31.20	&\underline{92.02}	&77.34	&1.43	&23.91	&\underline{70.64}	&\textbf{63.96}	&\underline{87.48}	&\textbf{44.92} &9.80 \\
        + LoRA   &\underline{85.99}	&42.60	&85.63	&\textbf{96.62}	&0.42	&\underline{24.92}	&69.07	&51.20	&87.48	&40.73 &12.60\\
        + General Data    &85.50	&14.40	&89.46	&71.38	&\textbf{3.54}	&23.40	&67.43	&58.60	&83.74	&42.52 &7.24 \\
        + OGPSA (Ours)  &\textbf{86.00}	&\underline{47.00}	&\textbf{92.53}	&94.81	&\underline{2.88}	&\textbf{25.59}	&\textbf{70.93}	&\underline{60.26}	&\textbf{89.18}	&\underline{43.59} & \textbf{31.50} \\
        \midrule
        \textbf{DPO}  &81.00	&\underline{42.80}	&70.31	&\textbf{97.66}	&0.25	&23.57	&66.21	&41.22	&77.92	&33.17 &4.54 \\
        + Merged  &\underline{88.00}	&30.80	&\underline{88.89}	&73.22	&1.27	&\textbf{25.42}	&70.36	&63.36	&84.97	&45.40  &8.99  \\
        + LoRA   &82.50	&\textbf{43.80}	&72.41	&\underline{87.35}	&0.62	&21.72	&69.64	&58.99	&\underline{85.36}	&43.05 &\underline{11.01} \\
        + General Data     &63.00	&15.40	&\textbf{90.42}	&57.28	&\underline{2.59}	&\underline{24.24}	&\underline{71.36}	&\textbf{64.51}	&\textbf{87.09}	&\textbf{50.30} &1.82\\
        + OGPSA (Ours) &\textbf{95.00}	&38.40	&75.86	&85.89	&\textbf{3.05}	&23.57	&\textbf{71.57}	&\underline{63.77}	&83.71	&\underline{49.59}  &\textbf{24.93} \\
        \midrule
        \textbf{SFT-DPO}    &87.00	&\underline{59.40}	&72.61	&\underline{99.87}	&0.28	&26.94	&69.00	&43.25	&84.53	&34.51 &\underline{19.74} \\
        + Merged   &82.00	&36.00	&\underline{89.46}	&83.29	&1.04	&24.92	&\textbf{70.21}	&\textbf{62.11}	&\textbf{87.44}	&43.67  &11.72\\
        + LoRA   &81.50	&\textbf{80.80}	&58.05	&\textbf{99.93}	&0.07	&10.27	&61.21	&16.82	&82.04	&26.29 &15.58 \\
        + General Data   &\underline{89.00}	&17.60	&\textbf{94.44}	&75.73	&\textbf{4.05}	&\textbf{29.46}	&66.71	&\underline{59.15}	&84.15	&\textbf{44.71} &16.82 \\
        + OGPSA (Ours)  &\textbf{91.50}	&45.80	&88.12	&94.48	&\underline{3.28}	&\underline{27.95}	&\underline{70.00}	&57.67	&\underline{84.98}	&\underline{43.73} &\textbf{32.98}\\
        \midrule
        \multicolumn{11}{c}{\textbf{Qwen2.5-7B-Instruct Model}} \\
        \midrule
        Instruct Baseline &65.50	&16.00	&96.74	&44.83	&3.33	&34.18	&73.50	&64.33	&88.77	&77.55 &-\\
        \midrule
        \textbf{SFT}     &87.00	&\underline{64.20}	&\textbf{100.00}	&\underline{90.48}	&0.79	&\underline{34.18}	&72.00	&57.30	&88.34	&69.48 &33.91 \\
        + Merged   &83.00	&45.60	&99.04	&66.65	&2.24	&34.04	&\underline{72.64}	&\underline{61.74}	&88.34	&\underline{73.82} &21.91 \\
        + LoRA   &\textbf{88.00}	&\textbf{68.80}	&\textbf{100.00}	&\textbf{95.54}	&0.67	&32.83	&71.50	&51.39	&\underline{88.77}	&69.78  &\underline{36.42}\\
        + General Data       &77.50	&26.60	&99.04	&70.38	&\underline{3.47}	&23.57	&69.93	&53.42	&86.28	&\textbf{75.35} &8.96 \\
        + OGPSA (Ours)    &\underline{87.50}	&57.60	&\textbf{100.00}	&87.37	&\textbf{3.61}	&\textbf{34.68}	&\textbf{73.21}	&\textbf{63.03}	&\textbf{89.97}	&69.72 &\textbf{39.06}\\
        \midrule
        \textbf{DPO}       &\underline{87.00}	&41.40	&98.47	&63.60	&1.57	&31.31	&\underline{73.43}	&\underline{63.22}	&87.52	&74.60 &\underline{16.69}\\
        + Merged  &79.50	&26.20	&97.32	&50.12	&2.59	&31.99	&\textbf{74.00}	&62.48	&\underline{89.62}	&\underline{76.25} &6.60  \\
        + LoRA   &85.50	&\underline{42.80}	&\underline{98.66}	&\underline{68.96}	&0.97	&31.14	&73.36	&61.55	&89.21	&72.80 &16.39 \\
        + General Data      &78.50	&21.00	&97.32	&46.89	&\underline{3.29}	&\underline{32.15}	&73.21	&62.85	&87.92	&\textbf{76.73} & 4.44 \\
        + OGPSA (Ours)   &\textbf{94.50}	&\textbf{49.40}	&\textbf{99.23}	&\textbf{72.64}	&\textbf{3.35}	&\textbf{34.51}	&72.79	&\textbf{63.40} &\textbf{90.68}	&74.05 &\textbf{31.44} \\
        \midrule
        \textbf{SFT-DPO}   &\textbf{92.50}	&\underline{65.80}	&\underline{99.62}	&95.53	&0.53	&30.98	&\underline{72.14}	&51.94	&88.36	&66.92 &\underline{33.98} \\
        + Merged   &87.50	&50.00	&99.43	&73.44	&1.62	&\underline{31.08}	&\textbf{72.93}	&\underline{60.44}	&88.75	&72.28  &23.86\\
        + LoRA   &\underline{90.50}	&\textbf{69.85}	&96.55	&\textbf{100.00}	&0.00	&21.04	&64.79	&29.39	&78.81	&63.18 &26.32 \\
        + General Data     &87.50	&35.20	&99.04	&73.16	&\textbf{3.49}	&28.45	&70.86	&56.93	&\textbf{89.83}	&\textbf{74.92} &18.99 \\
        + OGPSA (Ours)  &\underline{90.50}	&63.60	&\textbf{100.00}	&\underline{97.14}	&\underline{3.03}	&\textbf{31.31}	&70.64	&\textbf{63.96}	&\underline{88.98}	&\underline{72.35} &\textbf{42.74}\\
        \bottomrule
    \end{tabular}
}
\vspace{-3ex}
\label{tab:main}
\end{table*}

\paragraph{First-order preservation and feasible descent.}
\label{sec:method_theory}
We justify the projection rule via a first-order preservation argument. Consider a local parameter perturbation
$\Delta\theta$. For each reference objective, a first-order expansion yields
\begin{equation}
\label{eq:taylor_method}
\mathcal{L}^{(i)}_{\mathrm{ref}}(\theta+\Delta\theta)
\approx
\mathcal{L}^{(i)}_{\mathrm{ref}}(\theta)
+
\langle g^{(i)}(\theta), \Delta\theta\rangle.
\end{equation}
Thus, a sufficient condition to preserve reference capability $i$ to first order is
$\langle g^{(i)}(\theta), \Delta\theta\rangle = 0$. Enforcing this for all $i$ yields the linear constraints
\begin{equation}
\label{eq:linear_constraints2}
\langle g^{(i)}(\theta), \Delta\theta\rangle = 0,\quad i=1,\dots,M,
\end{equation}
equivalently $\Delta\theta \in \mathcal{S}_{\mathrm{gen}}(\theta)^\perp$. Within this local linearized constraint set, the projected gradient is the steepest instantaneous safety-descent direction. This is a local first-order statement about the chosen reference losses; it should not be read as a global guarantee that all downstream capability metrics will be preserved.

\vspace{+.5cm}

\begin{proposition}[Steepest Feasible Descent]
\label{prop:steepest}
Let $f(\theta)=\mathcal{L}_{\mathrm{safe}}(\theta)$ with gradient $g=\nabla f(\theta)$, and let
$\mathcal{S}_{\mathrm{gen}}(\theta)=\mathrm{span}\{g^{(i)}(\theta)\}_{i=1}^M$.
Among all unit vectors $v$ satisfying $\langle g^{(i)}(\theta), v\rangle=0$ for all $i$,
the maximally descending direction is
\begin{equation}
v^\star
=
-\frac{P_{\mathcal{S}_{\mathrm{gen}}(\theta)^\perp}(g)}{\|P_{\mathcal{S}_{\mathrm{gen}}(\theta)^\perp}(g)\|}.
\end{equation}
\end{proposition}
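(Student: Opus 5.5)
The plan is to recast the statement as a linear optimization over the unit sphere of the constraint subspace and settle it with a single Cauchy--Schwarz step. The directional derivative of $f$ along a unit vector $v$ is $\langle g, v\rangle$, so ``maximally descending'' means minimizing $\langle g, v\rangle$ subject to $\|v\|=1$ and $\langle g^{(i)}(\theta), v\rangle = 0$ for all $i$. As noted after Eq.~\eqref{eq:linear_constraints2}, the constraints say exactly that $v \in \mathcal{S}_{\mathrm{gen}}(\theta)^\perp =: \mathcal{S}^\perp$. The feasible set is therefore the unit sphere of the closed subspace $\mathcal{S}^\perp$ of $\mathbb{R}^d$.

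Write $P := P_{\mathcal{S}^\perp}$ for the orthogonal projector, so $g = Pg + (I-P)g$ with $(I-P)g \in \mathcal{S}_{\mathrm{gen}}(\theta)$. For any feasible $v$ we have $v = Pv$, and $P$ is self-adjoint, so
\[
\langle g, v\rangle = \langle g, Pv\rangle = \langle Pg, v\rangle .
\]
The objective thus depends on $g$ only through its projection. Cauchy--Schwarz gives
\[
\langle Pg, v\rangle \ge -\|Pg\|\,\|v\| = -\|Pg\| .
\]
Equality holds if and only if $v$ is a nonpositive multiple of $Pg$. Since $\|v\|=1$ and $Pg \neq 0$, this forces $v = -Pg/\|Pg\| = v^\star$.

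It remains to check that $v^\star$ is admissible. It lies in $\mathcal{S}^\perp$ because $Pg$ does, so $\langle g^{(i)}(\theta), v^\star\rangle = 0$ for every $i$. It has unit norm by construction. Hence it attains the lower bound $-\|Pg\|$, and the equality case shows it is the unique minimizer.

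The main obstacle is a well-posedness caveat rather than the core argument. The formula for $v^\star$ requires $P_{\mathcal{S}_{\mathrm{gen}}(\theta)^\perp}(g) \neq 0$, i.e.\ $g \notin \mathcal{S}_{\mathrm{gen}}(\theta)$. I would state this as an implicit hypothesis. If instead $g \in \mathcal{S}_{\mathrm{gen}}(\theta)$, every feasible direction has zero first-order change, so no feasible direction strictly descends and the maximizer is not unique. I would also remark that $P$ may be computed as $I - UU^\top$ with $U$ any orthonormal basis of $\mathcal{S}_{\mathrm{gen}}(\theta)$, so the result is independent of linear dependence among the $g^{(i)}$. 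This connects the statement to the projected update in Eq.~\eqref{eq:proj_grad}, with the exact span in place of the thresholded Gram--Schmidt basis.
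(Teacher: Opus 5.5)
Your proof is correct and follows the paper's argument: restrict to $\mathcal{S}_{\mathrm{gen}}(\theta)^\perp$, reduce $\langle g,v\rangle$ to $\langle Pg, v\rangle$, and conclude with Cauchy--Schwarz and its equality case; the paper reaches $\langle Pg, v\rangle$ through the decomposition $g=g_\parallel+g_\perp$, while your self-adjointness step is the same thing. Your hypothesis $P_{\mathcal{S}_{\mathrm{gen}}(\theta)^\perp}(g)\neq 0$ is the assumption the paper adds in its appendix lemma, and your explicit feasibility and uniqueness checks make the argument slightly more complete than the main-text proof.
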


\begin{proof}
Any feasible $v$ lies in $\mathcal{S}_{\mathrm{gen}}(\theta)^\perp$.
Decompose $g=g_\parallel+g_\perp$ with $g_\parallel\in\mathcal{S}_{\mathrm{gen}}(\theta)$ and
$g_\perp\in\mathcal{S}_{\mathrm{gen}}(\theta)^\perp$.
Then $\langle g,v\rangle=\langle g_\perp,v\rangle$.
By Cauchy--Schwarz~\citep{bjorck1994numerics,leon2013gram}, the minimum of $\langle g_\perp,v\rangle$ over $\|v\|=1$ is $-\|g_\perp\|$,
achieved when $v=-g_\perp/\|g_\perp\|$.
\end{proof}

\vspace{-.5cm}
\paragraph{Algorithm and complexity.}
\label{sec:method_algo}
Algorithm~\ref{alg:ogpsa} summarizes OGPSA. The overhead consists of: (i) an additional $M$ reference-gradient computations
every $K$ steps (i.e., $M$ extra backward passes on small reference mini-batches per refresh), and (ii) a projection of
$g_{\mathrm{safe}}$ onto a rank-$M'$ subspace, which requires computing $U_\tau^\top g_{\mathrm{safe}}$ and forming
$U_\tau(U_\tau^\top g_{\mathrm{safe}})$ (equivalently $M'$ inner products plus a linear combination). For small $M'$ and moderate $K$, this overhead is substantially smaller than large-scale replay in our experiments, although it is not zero and should be reported together with wall-clock time and token counts.

\vspace{-.3 cm}

\section{Experiments}\label{sec:experiments}

\vspace{-.3 cm}

In this section, we describe the experimental setup, then present the results with an in-depth analysis.


\vspace{-.3 cm}
\subsection{Experiments Setup}
\vspace{-.3 cm}

We evaluate our framework on LLaMA3.1-8B-Instruct~\citep{dubey2024llama} and Qwen2.5-7B-Instruct~\citep{Yang2024Qwen25TR} across three standard safety alignment paradigms: SFT, DPO~\citep{rafailov2023direct}, and sequential SFT-DPO. The safety alignment utilizes a 10k-sample dataset derived from PKU-SafeRLHF~\citep{ji2024pku}. Apart from the naive tuning method, we compare OGPSA against three established mitigation strategies: (1) \textit{+Merged} (weight interpolation), (2) \textit{+LoRA} (low-rank adaptation), and (3) \textit{+General Data}, a classic replay baseline mixing 10k UltraFeedback~\citep{cui2024ultrafeedback} samples.
Further details regarding the experimental setup, including model architectures, datasets, baseline methods, evaluation metrics, and training protocols, are provided in Appendix Sec.~\ref{sec:appendix_exp}. We report individual benchmark scores as the primary evidence. The aggregate \textit{Avg. Gain} is used only as a compact summary of the safety--utility trade-off relative to the instruct baseline; because it combines heterogeneous benchmarks, conclusions should be interpreted together with the per-benchmark results rather than from the aggregate alone.

\vspace{-.3 cm}
\subsection{Overall Performance}
\vspace{-.25 cm}

\paragraph{Compare to Standard Baseline.}
As shown in Table~\ref{tab:main} and Fig.~\ref{fig:main_results}, standard safety alignment methods (SFT, DPO, and SFT-DPO) improve several safety metrics but often reduce parts of general utility, producing a visible alignment tax.
Existing mitigation strategies reduce some regressions but do not uniformly dominate the trade-off. Mixing general data (\textit{+ General Data}) or applying parameter averaging (\textit{+ Merged}) can partially recover general capabilities, but may dilute the safety signal in some settings. Parameter-efficient updating (\textit{+ LoRA}) can maintain strong safety scores, but it may still underperform on several general-capability metrics (Fig.~\ref{fig:main_results}, Table~\ref{tab:main}). 

OGPSA improves the observed trade-off by preserving more of the selected general-capability metrics while maintaining competitive safety performance. To summarize this mixed benchmark behavior, we report average performance gain (\textit{Avg. Gain}) against the instruct baseline, while retaining all individual metrics in Table~\ref{tab:main}. OGPSA obtains the highest \textit{Avg. Gain} across both models and all three alignment stages in this evaluation. In the sequential SFT-DPO pipeline, it increases \textit{Avg. Gain} on Qwen2.5-7B-Instruct from 33.98\% to 42.74\% and on Llama3.1-8B-Instruct from 19.74\% to 32.98\%.

\vspace{-.3 cm}
\paragraph{Compare to Advanced Baseline.}
We further compare against two stronger baselines in the Qwen2.5-7B SFT setting: (1) GPM~\citep{gpm}, a representative continual learning approach using gradient projection memory, and (2) STAIR~\citep{zhangstair}, a recent safety-alignment framework requiring 20K mixed data samples. This comparison is not intended to exhaust all CL or alignment baselines, but it tests whether OGPSA remains competitive against a direct projection-based CL method and a stronger safety-alignment method. As shown in Table~\ref{tab:sota_baseline}, OGPSA obtains the highest aggregate gain and improves several general-capability metrics while keeping safety scores competitive.

\begin{table*}[h]
\centering
\vspace{-0.4cm}
\caption{Comparison with advanced baselines, a continual learning baseline GPM~\citep{gpm} and a SOTA safety alignment baseline STAIR~\citep{zhangstair}, using SFT on Qwen2.5-7B-Instruct Model. 
}
\label{tab:sota_baseline}
\resizebox{1.00\textwidth}{!}{
    \begin{tabular}{lccccccccccc}
        \toprule
\multirow{2}{*}{\textbf{Model}}
         & \multicolumn{4}{c}{\textbf{Safety ($\uparrow$)}} & \multicolumn{3}{c}{\textbf{Truthful ($\uparrow$)}} & \multicolumn{2}{c}{\textbf{Helpful ($\uparrow$)}} & \textbf{Robustness ($\uparrow$)} &\multirow{2}{*}{\textbf{Avg. Gain ($\uparrow$)}} \\
        \cmidrule(lr){2-5} \cmidrule(lr){6-8} \cmidrule(lr){9-10} \cmidrule(lr){11-11}
        & XSTest & WildChat & Stereotype & StrongReject & SimpleQA	& GPQA &MMLU & IFEval & HHH & AdvGLUE \\    
        \midrule
        Qwen2.5-7B-Instruct &65.50	&16.00	&96.74	&44.83	&3.33	&34.18	&73.50	&64.33	&88.77	&77.55 &- \\
        \midrule
        \textbf{SFT}     &\underline{87.00}	&\textbf{64.20}	&\textbf{100.00}	&\textbf{90.48}	&0.79	&34.18	&72.00	&57.30	&88.34	&\underline{69.48} & \underline{33.91} \\
        +GPM~\citep{gpm}     &86.00 &\underline{61.60} &\textbf{100.00} &86.66 &1.06 &33.17 &\underline{73.00} &\underline{61.92} &\underline{89.01} &68.81 &32.64\\
        +STAIR~\citep{zhangstair}     &85.50 &29.40 &99.81 &78.29 &\textbf{4.55} &\underline{34.34} &70.07 &26.99 &83.78 &44.97 & 11.89 \\
        + OGPSA (Ours)    &\textbf{87.50}	&57.60	&\textbf{100.00}	&\underline{87.37}	&\underline{3.61}	&\textbf{34.68}	&\textbf{73.21}	&\textbf{63.03}	&\textbf{89.97}	&\textbf{69.72} & \textbf{39.06} \\
        \bottomrule
    \end{tabular}
}
\vspace{-0.4cm}
\end{table*}

\paragraph{Resistance to Optimization-Based Jailbreaks.}

\begin{wraptable}{r}{6.2 cm}
\vspace{-.65cm}
\caption{Resistance to Optimization-Based Jailbreaks (I-GCG~\citep{gcg}). 
}
\label{tab:gcg_jailbreak}
\centering
\begin{small}
\resizebox{0.95\linewidth}{!}{
\begin{tabular}{lcccc}
\toprule
\textbf{Model} & \textbf{ASR} $\downarrow$ & \textbf{Best Loss} $\uparrow$ & \textbf{$\Delta$ Loss} $\downarrow$ & \textbf{Steps to Best Loss} $\uparrow$ \\
\midrule
Qwen2.5-7B-Instruct & 82\% & 0.06 & 1.88 & 258.86 \\
\midrule
SFT                & 32\% & 0.16 & 1.77 & 264.12 \\
+ OGPSA (Ours)     & \textbf{26\%} & \textbf{0.17} & \textbf{1.33} & \textbf{269.16} \\
\midrule
DPO                & 54\% & 0.12 & 3.95 & 255.98 \\
+ OGPSA (Ours)     & \textbf{24\%} & \textbf{0.19} & \textbf{3.11} & \textbf{269.44} \\
\bottomrule
\end{tabular}
}
\end{small}
\end{wraptable}

As an initial stress test of whether the projected update creates an obvious optimization-based vulnerability, we evaluate I-GCG~\citep{gcg} on Qwen2.5-7B-Instruct. As shown in Table~\ref{tab:gcg_jailbreak}, OGPSA reduces Attack Success Rate (ASR) relative to the corresponding SFT and DPO baselines and keeps the optimization difficulty comparable or higher under this attack. This result suggests that OGPSA does not introduce an immediate vulnerability to this specific optimization-based jailbreak, although broader adaptive and multi-turn attack evaluations remain necessary.

In summary, across the evaluated models and training stages, OGPSA consistently improves the empirical safety--utility frontier relative to the standard baselines considered here. The strongest evidence is the combination of the aggregate frontier in Fig.~\ref{fig:main_results} and the per-benchmark values in Table~\ref{tab:main}; we avoid claiming that the method is Pareto-optimal outside these evaluated settings.


\vspace{-.3 cm}
\subsection{Ablations}
\vspace{-.3 cm}
We conduct ablation studies on the Qwen-2.5-7B model~\citep{Yang2024Qwen25TR} to understand the critical components of OGPSA.

\paragraph{Impact of Subspace Dimensionality and Diversity.}

\begin{wraptable}{r}{7.0 cm}
\vspace{-1.3cm}
\caption{Effect of general capability subspace composition on alignment outcomes using DPO~\citep{rafailov2023direct}.
}
\resizebox{0.5\textwidth}{!}{
    \begin{tabular}{lcccccc}
        \toprule
\multirow{2}{*}{\textbf{Model}}
         & \multicolumn{2}{c}{\textbf{Safety ($\uparrow$)}} & \multicolumn{2}{c}{\textbf{Truthful ($\uparrow$)}} & \multicolumn{2}{c}{\textbf{Helpful ($\uparrow$)}} \\
        \cmidrule(lr){2-3} \cmidrule(lr){4-5} \cmidrule(lr){6-7}
        & Stereotype  & StrongReject & SimpleQA &MMLU & IFEval & HHH \\   
        \midrule
        Qwen2.5-7B-Instruct Model &96.74		&44.83	&3.33	&73.50	&64.33	&88.77	\\ 
        \midrule
        w/o ours   &98.47	&63.60	&1.57	&73.43	&63.22	&87.52	 \\
        + w/ 1 dim (Helpful)   &98.66	&72.75	&1.94	&73.21	&62.48	&88.74 \\
        + w/ 1 dim (Truthful)    &\textbf{99.62}	&70.74	&3.17	&\textbf{74.36}	&61.74	&87.54 \\
        + w/ 1 dim (Mixed) &99.23	&\textbf{72.87}	&3.28	&73.07	&61.00	&89.21 \\
        + w/ 2 dims (Helpful$+$Truthful)   &99.23	&72.64	&\textbf{3.35}	&72.79	&\textbf{63.40}	&\textbf{90.68} \\
        \bottomrule
    \end{tabular}
}
\vspace{-0.65cm}
\label{tab:base_direction}
\end{wraptable}

Following previous studies~\citep{aghajanyan2021intrinsic,zhou2023lima,ying2026truthfulness}, we investigate subspace construction using single versus diverse domains. As shown in Table~\ref{tab:base_direction}, one-dimensional projections provide targeted protection: anchoring solely on \textit{UltraFeedback} (Helpful) boosts \textit{HHH} (88.74\%) but provides limited protection for truthfulness (\textit{SimpleQA} 1.94\%), whereas \textit{HaluEval} (Truthful) restores \textit{SimpleQA} (3.17\%) but leads to lower \textit{IFEval} and \textit{HHH}. This suggests that a single reference direction is too narrow for broad capability retention. Averaging mixed datasets into one gradient (``1 dim Mixed'') is also less effective on instruction following (\textit{IFEval} 61.00\%). In contrast, spanning a rank-2 subspace with separate Helpful and Truthful directions gives the best overall balance in this ablation (e.g., \textit{SimpleQA} 3.35\%, \textit{IFEval} 63.40\%, \textit{HHH} 90.68\%). Consistent SFT results (Appendix Tab.~\ref{tab:base_direction_sft}) and math-domain experiments (Appendix Tables~\ref{tab:qwen-math-results} and~\ref{tab:gsm8k_ablation}) further support the importance of choosing reference directions that match the capabilities one aims to preserve.

\paragraph{Data Efficiency and Update Frequency.}

\begin{table*}[h]
\vspace{-.5cm}
    \centering
    \begin{minipage}[t]{0.48\textwidth}
        \centering
        \caption{Robustness of gradient estimation to sample size budgets using DPO~\citep{rafailov2023direct}.}
        \label{tab:data}
        \resizebox{\linewidth}{!}{
            \begin{tabular}{lccccccc}
                \toprule
                \multirow{2}{*}{\textbf{Model}}
                 & \multicolumn{2}{c}{\textbf{Safety ($\uparrow$)}} & \multicolumn{2}{c}{\textbf{Truthful ($\uparrow$)}} & \multicolumn{2}{c}{\textbf{Helpful ($\uparrow$)}}  \\
                \cmidrule(lr){2-3} \cmidrule(lr){4-5} \cmidrule(lr){6-7}
                 & Stereotype   & StrongReject & SimpleQA &MMLU & IFEval & HHH  & \\    
                \midrule
                Qwen2.5-7B-Instruct Model &96.74      &44.83  &3.33   &73.50  &64.33  &88.77 \\
                \midrule
                w/o ours       &98.47  &63.60  &1.57   &\textbf{73.43} &63.22  &87.52   \\
                + General Data (10k)    &97.32  &46.89  &3.29   &73.21  &62.85  &87.92 \\
                 + w/ 50    &99.43      &70.69  &\textbf{3.37}  &72.57  &61.18  &\textbf{91.53}  \\
                + w/ 100    &\textbf{99.62}     &71.29  &\textbf{3.37}  &72.79  &\textbf{64.33} &90.68     \\
                + w/ 150    &\textbf{99.62}     &72.48  &3.24   &73.00  &61.92  &90.68  \\
                + w/ 200    &99.23      &\textbf{72.64} &3.35   &72.79  &63.40  &90.68  \\
                \bottomrule
            \end{tabular}
        }
    \end{minipage}
    \hfill 
    \begin{minipage}[t]{0.48\textwidth}
        \centering
        \caption{Effect of subspace update frequency on optimization using DPO~\citep{rafailov2023direct} .}
        \label{tab:frequency}
        \resizebox{\linewidth}{!}{
            \begin{tabular}{lccccccc}
                \toprule
                \multirow{2}{*}{\textbf{Model}}
                 & \multicolumn{2}{c}{\textbf{Safety ($\uparrow$)}} & \multicolumn{2}{c}{\textbf{Truthful ($\uparrow$)}} & \multicolumn{2}{c}{\textbf{Helpful ($\uparrow$)}}  \\
                \cmidrule(lr){2-3} \cmidrule(lr){4-5} \cmidrule(lr){6-7}
                 & Stereotype  & StrongReject & SimpleQA &MMLU & IFEval & HHH  & \\    
                \midrule
                Qwen2.5-7B-Instruct Model &96.74      &44.83  &3.33   &73.50  &64.33  &88.77 \\
                \midrule
                w/o our       &98.47  &63.60  &1.57   &73.43  &63.22  &86.28   \\
                + w/ step 2   &\textbf{99.81} &\textbf{80.59} &\textbf{3.49}  &72.36  &59.89  &\textbf{90.68} \\
                + w/ step 5   &99.23  &72.64  &3.35   &72.79  &\textbf{63.40} &\textbf{90.68}\\
                + w/ step 10    &97.51  &65.04  &2.91   &72.93  &61.92  &\textbf{90.68}\\
                + w/ No updating &98.66  &65.07  &1.50   &\textbf{74.93} &60.81  &88.98 \\
                \bottomrule
            \end{tabular}
        }
    \end{minipage}
\vspace{-.2cm}
\end{table*}

\begin{wraptable}{r}{6.8 cm}
\vspace{-.5 cm}
\caption{Training cost comparison.}
\label{tab:training-cost}
\centering
\begin{small}
\resizebox{0.95\linewidth}{!}{
\begin{tabular}{lrr}
\toprule
\textbf{Method} & \textbf{Training Time} & \textbf{Input Tokens (M)} \\
\midrule
SFT             & 1h 49m                & 3.97 \\
+ General Data  & 4h 05m                & 25.21 \\
+ OGPSA (Ours)  & 2h 56m                & 4.39 \\
\bottomrule
\end{tabular}
}
\end{small}
\vspace{-.3 cm}
\end{wraptable}

We evaluate the cost-efficiency and optimization dynamics of OGPSA by varying the reference data size (Table~\ref{tab:data}) and subspace update frequency (Table~\ref{tab:frequency}). First, OGPSA is data-efficient: in this DPO setting, 100--200 samples per reference dimension are sufficient to recover several capability metrics (e.g., \textit{IFEval} and \textit{HHH}) more effectively than the 10K-sample replay baseline. SFT exhibits a similar trend (Appendix Tab.~\ref{tab:data_sft}). This data efficiency does not mean zero overhead: OGPSA increases training time relative to plain SFT because it computes periodic reference gradients, but it remains faster and more token-efficient than general-data mixing in our setup (Table~\ref{tab:training-cost}). Second, dynamically updating the subspace matters. A static projection (``No updating'') can become stale as parameters move, whereas periodic re-estimation---every 5 steps in DPO and 30 steps in SFT in our experiments (Appendix Tab.~\ref{tab:frequency_sft})---provides a better empirical trade-off.

\paragraph{Scalability Analysis}\label{sec:scalability}

\begin{wraptable}{r}{6.2 cm}
\vspace{-.6 cm}
\caption{Scalability analysis of OGPSA across model sizes.
}
\resizebox{0.45\textwidth}{!}{
    \begin{tabular}{lccccccc}
        \toprule
\multirow{2}{*}{\textbf{Model}}
         & \multicolumn{2}{c}{\textbf{Safety ($\uparrow$)}} & \multicolumn{2}{c}{\textbf{Truthful ($\uparrow$)}} & \multicolumn{2}{c}{\textbf{Helpful ($\uparrow$)}}\\
        \cmidrule(lr){2-3} \cmidrule(lr){4-5} \cmidrule(lr){6-7}
        & Stereotype  & StrongReject & SimpleQA &MMLU & IFEval & HHH  & \\    
        \midrule
        \multicolumn{7}{c}{\textbf{Qwen2.5-0.5B-Instruct}} \\
        \midrule
        Instruct Baseline &94.44	&38.43	&0.69	&37.14	&22.55	&65.75	 \\ \cline{2-7}
        SFT              &79.31	&\textbf{86.53}	&0.09	&36.57	&17.19	&63.69	\\
        SFT + Ours       &\textbf{96.74}	&85.38	&\textbf{0.88}	&\textbf{38.00}	&\textbf{23.84}	&\textbf{66.29} \\ \cline{2-7}
        DPO   	         &65.52	&74.75	&0.00	&\textbf{36.78}	&16.08	&66.59\\
        DPO + Ours       &\textbf{99.23}	&\textbf{79.97}	&\textbf{1.06}	&36.57	&\textbf{22.37}	&\textbf{66.69}\\
        \midrule
        \multicolumn{7}{c}{\textbf{Qwen2.5-3B-Instruct}} \\
        \midrule
        Instruct Baseline	&100.00	&43.08	&1.48	&65.36	&54.71	&80.46 \\ \cline{2-7}
        SFT     	        &\textbf{100.00}	&\textbf{69.20}	&0.37	&63.14	&51.39	&\textbf{80.45}\\
        SFT + Ours          &\textbf{100.00}	&66.93	&\textbf{1.13}	&\textbf{63.93}	&\textbf{53.97}	&78.75 \\ \cline{2-7}
        DPO   	            &\textbf{100.00}	&59.78	&0.37	&\textbf{64.64}	&52.49	&\textbf{80.04} \\
        DPO + Ours          &\textbf{100.00}	&\textbf{64.35}	&\textbf{1.90}	&64.50	&\textbf{52.87}	&79.26\\
        \midrule
        \multicolumn{7}{c}{\textbf{Qwen2.5-7B-Instruct}} \\
        \midrule
        Instruct Baseline	&96.74	&44.83	&3.33	&73.50	&64.33	&88.77 \\ \cline{2-7}
        SFT     	        &\textbf{100.00}	&\textbf{90.48}	&0.79	&72.00	&57.30	&\textbf{88.34}\\
        SFT + Ours          &\textbf{100.00}	&87.43	&\textbf{3.61}	&\textbf{73.21}	&\textbf{63.03}	&87.07 \\ \cline{2-7}
        DPO   	            &98.47	&63.60	&1.57	&\textbf{73.43}	&63.22	&87.52\\
        DPO + Ours          &\textbf{99.23}	&\textbf{72.64}	&\textbf{3.35}	&72.79	&\textbf{63.40}	&\textbf{90.68}\\
        \bottomrule
    \end{tabular}
}
\vspace{-0.5cm}
\label{tab:model_size}
\end{wraptable}

To examine whether the trend holds beyond a single model size, we evaluate OGPSA on Qwen2.5 models from 0.5B to 7B parameters. As shown in Table~\ref{tab:model_size}, OGPSA generally improves the retained general-capability metrics under both SFT and DPO while maintaining competitive safety scores. On Qwen2.5-0.5B, for example, it improves \textit{SimpleQA} after SFT from 0.09\% to 0.88\% and improves \textit{Stereotype} from 79.31\% to 96.74\%. Similar patterns appear for the 3B and 7B models, although the magnitude varies by benchmark and model scale. These results support the scalability of the approach within the tested Qwen2.5 family, while leaving larger models and other architectures as future work.

\vspace{-.3cm}
\section{Conclusion and Limitations}
\vspace{-.3cm}

In this work, we framed the alignment tax as a heterogeneous continual learning problem and proposed OGPSA, a lightweight gradient-projection method to mitigate capability regression. By updating along the orthogonal complement of a learned low-rank capability subspace, OGPSA provides a simple, plug-and-play solution for SFT, DPO, and sequential SFT$\rightarrow$DPO pipelines, consistently improving the safety--utility trade-off. While our results demonstrate that explicit control of gradient interference is a highly promising direction, OGPSA's efficacy is bounded by its first-order local approximation, reliance on reference data diversity, and minor computational overhead in distributed settings. To address these boundaries, future work will focus on gradient-conflict diagnostics, stronger compute-matched baselines, comprehensive black-box adaptive safety evaluations, and scaling to larger architectures.

\newpage
\section{Impact Statement}
This work aims to improve the safety--utility trade-off of LLM post-training. A potential positive impact is reducing unnecessary capability degradation when applying safety alignment, which may make aligned systems more useful in benign settings. A potential risk is that improved utility after safety training could be mistaken for comprehensive safety; our method does not guarantee robustness to all jailbreaks, misuse prompts, or deployment distributions. We therefore recommend using OGPSA as one component of a broader safety pipeline that includes red-teaming, policy evaluation, refusal calibration, monitoring, and human oversight.

\section{Large language model assistance}
Large language models were used to polish the manuscript. The authors reviewed and edited the content and take responsibility for the final paper.

\bibliographystyle{plain}
\bibliography{reference}

\newpage
\appendix

\section{Detailed Experimental Setups}
\label{sec:appendix_exp}

In this work, we conduct all our experiments on clusters with 8 NVIDIA A800 GPUs.

\paragraph{Models and Datasets.}
We conduct experiments using two widely adopted instruction-tuned Large Language Models: LLaMA3.1-8B-Instruct~\citep{dubey2024llama} and Qwen2.5-7B-Instruct~\citep{Yang2024Qwen25TR}. 
For the safety alignment phase, we utilize a seed dataset $\mathcal{D}_{\text{safe}}$ consisting of 10k samples sampled from PKU-SafeRLHF~\citep{ji2024pku}, where the SFT labels and DPO chosen labels are refuse response generated by gpt-4omini, and DPO rejected labels are select from the most unsafe answer from the original dataset. To simulate the standard practice of data replay for maintaining general capabilities, we draw 10k pairwise samples from UltraFeedback~\citep{cui2024ultrafeedback} as the general data source. Where the SFT labels and DPO chosen labels are selected from the highest score answer from original dataset, and DPO rejected labels are select from the lowest score answer. For our proposed method (OGPSA), we require a small reference set to estimate the general capability subspace. To efficiently capture the essential dimensions of general utility~\citep{aghajanyan2021intrinsic,zhou2023lima}, we sample a minimal budget of data to represent key competencies:
(1) \textit{Helpfulness}: 200 samples randomly selected from UltraFeedback~\citep{cui2024ultrafeedback}. (2) \textit{Truthfulness}: 200 samples randomly selected from HaluEval~\citep{li2023halueval}, where the SFT labels and DPO chosen labels are correct answers, and DPO rejected labels are hallucinated answers. We preprocess the hallucanated answers to be the same format of the correct answers to prevent reward hacking on the answer format.
This highlights the data efficiency of our approach compared to replay-based methods.

\paragraph{Baselines.}
We evaluate our framework against three standard safety alignment paradigms: Supervised Fine-Tuning (SFT), Direct Preference Optimization (DPO)~\citep{rafailov2023direct}, and Sequential SFT-DPO. 
To benchmark the mitigation of the Alignment Tax, we compare OGPSA against three representative regularization and replay strategies applied to the standard paradigms: (1) \textit{+Merged}: A weight-interpolation method that linearly averages the parameters of the pre-trained model and the safety-aligned model to balance capabilities~\citep{farn2024safeguard,wortsman2022robust}. (2) \textit{+LoRA}: parameter-efficient fine-tuning using low-rank adaptation, which acts as a regularization constraint by updating only a small subset of parameters~\citep{hu2022lora}. (3) \textit{+General Data}: A classic experience replay approach that mixes the 10k general samples from UltraFeedback~\citep{cui2024ultrafeedback} into the safety training data.

\paragraph{Training Details}
\label{sec:appendix_train}
We have done all the training of LLMs with LLaMA-Factory~\citep{zheng2024llamafactory}, which is a popular toolbox for LLM training. Consistent with established protocols~\citep{zhangstair}, all models are trained for $3$ epochs during the SFT stage and $1$ epoch during the DPO stage.
We tune the learning rate $1e-6$ and $\beta$ for DPO from $0.2$. Batch size is fixed as $128$ and weight decay is set to $0$. We adopt a cosine scheduler with a warm-up ratio of $0.1$. Following the official implementation, we set learning rate $1e-4$ for LoRA. For the subspace update frequency $K$. We set $K=30$ for all SFT and $K=5$ for DPO experiments.

\paragraph{Evaluation.}\label{sec:exp_eval}
We employ a comprehensive suite of 10 benchmarks to evaluate the trade-off between safety and general utility. \textit{Safety (Harmlessness):} Following established protocols~\citep{guan2024deliberative}, models are evaluated on their ability to refuse harmful queries. We utilize StrongReject~\citep{souly2024strongreject}, XSTest~\citep{rottger2023xstest}, the toxic split of WildChat~\citep{zhaowildchat}, and the stereotype split of Do-Not-Answer~\citep{wang2023not}. For StrongReject, we report the average defense success score against the top-2 jailbreak attacks (PAIR~\citep{chaojailbreaking} and PAP~\citep{zeng2024johnny}), while reporting refusal rates for other datasets. \textit{General Utility:} We assess diverse capabilities including truthfulness via SimpleQA~\citep{wei2024measuring}, GPQA~\citep{rein2024gpqa}, and MMLU~\citep{hendrycksmeasuring}, and general helpfulness via BIG-bench HHH~\citep{zhou2024beyond} and instruction following via IFEval~\citep{zhou2023instruction}. Additionally, we evaluate adversarial robustness using AdvGLUE~\citep{wang2adversarial}. We report the official metrics for all benchmarks. For evaluation, we use default temperature for generation to guarantee the reproducibility by default. Below, we introduce the benchmarks and corresponding metrics in detail.

For StrongReject~\citep{souly2024strongreject}, we take the official evaluation protocol, which uses GPT-4o-mini to evaluate the responses and gives a rubric-based score reflecting the willingness and capabilities in responding to harmful queries. We follow~\citep{jaech2024openai} and take the goodness score, which is $1-\text{rubric score}$, as the metric. We evaluate models on prompts with no jailbreak in addition to the reported top-2 jailbreak methods PAIR~\citep{chaojailbreaking}, and PAP-Misrepresentation~\citep{zeng2024johnny}. For main results, we only report the average goodness score on the two jailbreak methods, since most methods achieve goodness scores near $1.0$. For XsTest~\citep{rottger2023xstest}, we select the unsafe split to evaluate the resistance to normal harmful queries and follow its official implementation on refusal determination with GPT-4o-mini. We report the sum of full refusal rate and partial refusal rate as the metric. For WildChat~\citep{zhaowildchat}, we filter the conversations with ModerationAPI\footnote{https://platform.openai.com/docs/guides/moderation} and eventually get 219 samples with high toxicity in English. For Stereotype, it is a split for evaluating the model's refusal behavior to queries associated with fairness issues in Do-Not-Answer~\citep{wang2023not}.

\section{Pseudo-code}

\begin{algorithm}[h]
\caption{OGPSA: Orthogonal Gradient Projection for Safety Alignment}
\label{alg:ogpsa}
\begin{algorithmic}[1]
    \renewcommand{\algorithmicrequire}{\textbf{Input:}}
    \renewcommand{\algorithmicensure}{\textbf{Output:}}
    
    \Require Pre-trained parameters $\theta_0$, safety loss $\mathcal{L}_{\mathrm{safe}}$, reference datasets $\{\mathcal{D}^{(i)}_{\mathrm{ref}}\}_{i=1}^M$, refresh period $K$, learning rate $\eta$.
    \Ensure Aligned parameters $\theta_T$.

    \State Initialize $U \leftarrow [\,]$
    \For{$t = 0, \dots, T-1$}
        \If{$t \bmod K = 0$} \Comment{\textcolor{blue}{Dynamic Subspace Construction (refresh every $K$ steps)}}
            \For{$i = 1$ \textbf{to} $M$}
                \State Sample $B^{(i)} \sim \mathcal{D}^{(i)}_{\mathrm{ref}}$
                \State Compute $g^{(i)} \leftarrow \nabla_\theta \mathbb{E}_{\xi \in B^{(i)}}[\ell^{(i)}_{\mathrm{ref}}(\theta_t; \xi)]$
            \EndFor
            \State Construct orthonormal basis $U \leftarrow \mathrm{GramSchmidt}(\{g^{(i)}\}_{i=1}^M)$ \hfill $\triangleright$ $U \equiv U_\tau$
        \EndIf

        \State Compute safety gradient $g_{\mathrm{safe}} \leftarrow \nabla_\theta \mathcal{L}_{\mathrm{safe}}(\theta_t)$  \Comment{\textcolor{blue}{Projected Safety Optimization}} 
        \State Project $\tilde{g}_{\mathrm{safe}} \leftarrow g_{\mathrm{safe}} - U(U^\top g_{\mathrm{safe}})$ \hfill $\triangleright$ \Comment{\textcolor{blue}{Remove conflicting components}} 
        \State Update $\theta_{t+1} \leftarrow \theta_t - \eta \tilde{g}_{\mathrm{safe}}$
    \EndFor
    \State \Return $\theta_T$
\end{algorithmic}
\end{algorithm}

\section{Theoretical Derivations}
\label{sec:appendix_proof}

In this section, we provide the general mathematical foundation for Proposition 1 presented in the main text. We prove that for any differentiable function, the direction of steepest descent restricted to a linear subspace is equivalent to the negative gradient projected onto that subspace.

\subsection{Steepest Descent Direction in a Linear Subspace}

\paragraph{Formalization.}
Let $V = \mathbb{R}^d$ be a $d$-dimensional Euclidean space (representing the parameter space of the LLM) equipped with the standard inner product $\langle \cdot, \cdot \rangle$ and the induced norm $\| \cdot \|$.
Consider the following definitions:
\begin{itemize}
    \item Let $f: \mathbb{R}^d \to \mathbb{R}$ be a differentiable scalar function (representing the loss function $\mathcal{L}$).
    \item Let $g = \nabla f(\theta) \in \mathbb{R}^d$ denote the gradient of $f$ at point $\theta$.
    \item Let $\mathcal{S} \subseteq \mathbb{R}^d$ be a linear subspace of $V$ (representing the allowable optimization subspace, e.g., $\mathcal{S}_{\text{gen}}^{\perp}$).
    \item Let $P_{\mathcal{S}}: \mathbb{R}^d \to \mathcal{S}$ denote the orthogonal projection operator onto $\mathcal{S}$.
\end{itemize}

\textbf{Objective:} We seek a unit vector $v \in \mathcal{S}$ (i.e., $\|v\| = 1$) that maximizes the rate of descent, equivalent to minimizing the directional derivative $D_v f(\theta)$.

\begin{lemma}[Optimal Descent in Subspace]
    Assume $P_{\mathcal{S}}(g) \neq 0$. The direction $v^* \in \mathcal{S}$ that minimizes the directional derivative of $f$ is given by:
    \begin{equation}
        v^* = - \frac{P_{\mathcal{S}}(g)}{\|P_{\mathcal{S}}(g)\|}.
    \end{equation}
    In other words, the steepest descent direction within a subspace is the negative of the orthogonally projected gradient.
\end{lemma}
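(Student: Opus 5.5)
The plan is to reduce the problem to a linear minimization over the unit sphere of $\mathcal{S}$ and then apply Cauchy--Schwarz. This is the same argument as in the proof of Proposition~\ref{prop:steepest}, now written for a general subspace $\mathcal{S}$.

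First I would rewrite the objective. Since $f$ is differentiable at $\theta$, the directional derivative along any unit vector $v$ is linear in $v$:
\[
D_v f(\theta)=\lim_{h\to 0}\frac{f(\theta+hv)-f(\theta)}{h}=\langle g, v\rangle .
\]
So the task becomes: minimize $\langle g,v\rangle$ over $v\in\mathcal{S}$ with $\|v\|=1$.

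Next I would use the orthogonal decomposition $g=P_{\mathcal{S}}(g)+(g-P_{\mathcal{S}}(g))$. The second term lies in $\mathcal{S}^\perp$ by the defining property of the orthogonal projection. Hence for every $v\in\mathcal{S}$ we have $\langle g,v\rangle=\langle P_{\mathcal{S}}(g),v\rangle$, and the part of the gradient outside $\mathcal{S}$ plays no role.

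Then Cauchy--Schwarz gives, for every unit $v\in\mathcal{S}$,
\[
\langle P_{\mathcal{S}}(g),v\rangle\ge -\|P_{\mathcal{S}}(g)\|\,\|v\| = -\|P_{\mathcal{S}}(g)\| .
\]
The candidate $v^*=-P_{\mathcal{S}}(g)/\|P_{\mathcal{S}}(g)\|$ is well defined because of the assumption $P_{\mathcal{S}}(g)\neq 0$. It lies in $\mathcal{S}$, since $\mathcal{S}$ is a subspace closed under scaling, and it has unit norm. Substituting it gives $\langle P_{\mathcal{S}}(g),v^*\rangle=-\|P_{\mathcal{S}}(g)\|$, so the bound is attained.

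To claim that $v^*$ is \emph{the} minimizer, I would invoke the equality case of Cauchy--Schwarz. Equality forces $v$ to be a nonpositive multiple of $P_{\mathcal{S}}(g)$, and the unit-norm constraint then pins it down to $v^*$, so the minimizer is unique. The only delicate points are the identity $D_vf=\langle g,v\rangle$, which uses genuine (Fréchet) differentiability, and the nonvanishing assumption. Neither is a real obstacle.

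Finally, taking $\mathcal{S}=\mathcal{S}_{\mathrm{gen}}(\theta)^\perp$ recovers Proposition~\ref{prop:steepest}. The feasibility constraints $\langle g^{(i)},v\rangle=0$ for all $i$ say exactly that $v\in\mathcal{S}_{\mathrm{gen}}(\theta)^\perp$.
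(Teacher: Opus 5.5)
Your proposal is correct and follows the same route as the paper's proof: write the directional derivative as $\langle g, v\rangle$, use the orthogonal decomposition $g = P_{\mathcal{S}}(g) + (g - P_{\mathcal{S}}(g))$ to drop the $\mathcal{S}^\perp$ component for $v \in \mathcal{S}$, and conclude with Cauchy--Schwarz and its equality case. You also check explicitly that $v^*$ lies in $\mathcal{S}$, has unit norm, and is the unique minimizer, which is slightly more careful than the paper's write-up but does not change the argument.
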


\begin{proof}
    \textbf{Step 1: Definition of the Directional Derivative.}
    The directional derivative of $f$ at $\theta$ along $v$ is given by:
    \begin{equation}
        D_v f(\theta) = \langle \nabla f(\theta), v \rangle = \langle g, v \rangle.
    \end{equation}

    \textbf{Step 2: Orthogonal Decomposition.}
    By the Projection Theorem, the gradient $g$ can be uniquely decomposed into a component within $\mathcal{S}$ and a component orthogonal to $\mathcal{S}$:
    \begin{equation}
        g = g_{\mathcal{S}} + g_{\perp},
    \end{equation}
    where $g_{\mathcal{S}} = P_{\mathcal{S}}(g) \in \mathcal{S}$ and $g_{\perp} \in \mathcal{S}^{\perp}$. By definition, for any vector $u \in \mathcal{S}$, the inner product $\langle g_{\perp}, u \rangle = 0$.

    \textbf{Step 3: Simplifying the Objective.}
    We minimize $\langle g, v \rangle$ subject to $v \in \mathcal{S}$ and $\|v\| = 1$. Substituting the decomposition:
    \begin{equation}
        \langle g, v \rangle = \langle g_{\mathcal{S}} + g_{\perp}, v \rangle = \langle g_{\mathcal{S}}, v \rangle + \underbrace{\langle g_{\perp}, v \rangle}_{0} = \langle g_{\mathcal{S}}, v \rangle.
    \end{equation}

    \textbf{Step 4: Minimization via Cauchy-Schwarz.}
    The problem reduces to minimizing the inner product $\langle g_{\mathcal{S}}, v \rangle$ subject to unit norm. By the Cauchy-Schwarz~\citep{bjorck1994numerics,leon2013gram} inequality:
    \begin{equation}
        |\langle g_{\mathcal{S}}, v \rangle| \leq \|g_{\mathcal{S}}\| \|v\| = \|g_{\mathcal{S}}\|.
    \end{equation}
    This implies:
    \begin{equation}
        -\|g_{\mathcal{S}}\| \leq \langle g_{\mathcal{S}}, v \rangle \leq \|g_{\mathcal{S}}\|.
    \end{equation}
    The lower bound (maximum descent) is achieved if and only if $v$ is collinear to $g_{\mathcal{S}}$ and points in the opposite direction. Thus, the optimal vector is:
    \begin{equation}
        v^* = - \frac{g_{\mathcal{S}}}{\|g_{\mathcal{S}}\|} = - \frac{P_{\mathcal{S}}(g)}{\|P_{\mathcal{S}}(g)\|}.
    \end{equation}
\end{proof}

\textbf{Connection to Main Text:} In the context of OGPSA, the subspace $\mathcal{S}$ corresponds to the null space of general capabilities $\mathcal{S}_{\text{gen}}^{\perp}$, and the function $f$ corresponds to the safety loss $\mathcal{L}_{\text{safe}}$. This lemma proves that our update rule follows the optimal path for safety optimization constrained within the non-forgetting zone.

\newpage

\section{Appendix Results}\label{sec:appendix_results}
\subsection{Overall Performance of Lllama}

\begin{figure}[h] 
    \centering
    \includegraphics[width=0.5\linewidth]{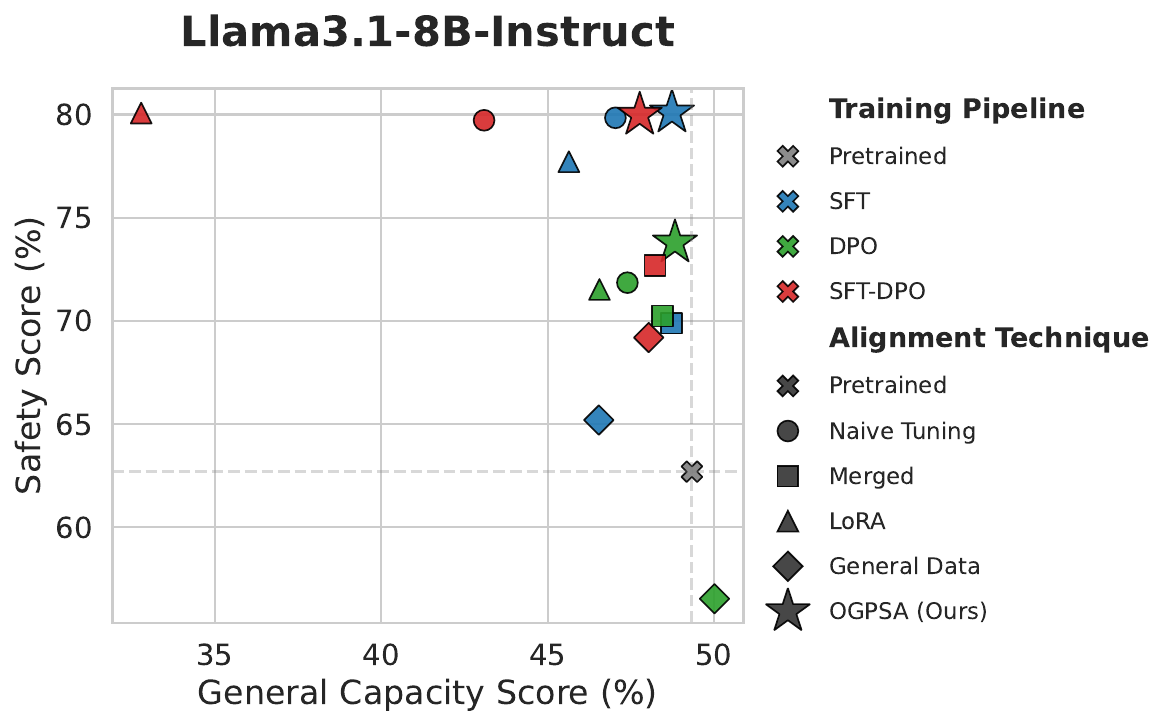}
    \caption{\textbf{Overall performance of alignment strategies on Llama3.1-8B-Instruct.} 
    We report the aggregate \textit{Safety Score} (avg. of 4 datasets) and \textit{General Capacity Score} (avg. of 6 datasets); see Table~\ref{tab:main} for details.
    }
    \label{fig:llama_results}
    \vspace{-.7cm}
\end{figure}

\subsection{Impact of Subspace Dimensionality and Diversity using SFT}

\begin{table*}[h]
\centering
\caption{\textbf{Effect of general capability subspace composition on alignment outcomes using SFT on Qwen.} We investigate how the diversity of reference data (Helpfulness vs. Truthfulness) and the dimensionality of the constraint subspace (1 vs. 2 directions) impact the alignment outcomes. 
The \textbf{best} results are marked in \textbf{bold}.
}
\resizebox{0.8\textwidth}{!}{
    \begin{tabular}{lcccccc}
        \toprule
\multirow{2}{*}{\textbf{Model}}
         & \multicolumn{2}{c}{\textbf{Safety ($\uparrow$)}} & \multicolumn{2}{c}{\textbf{Truthful ($\uparrow)$}} & \multicolumn{2}{c}{\textbf{Helpful ($\uparrow$)}} \\
        \cmidrule(lr){2-3} \cmidrule(lr){4-5} \cmidrule(lr){6-7}
        & Stereotype  & StrongReject & SimpleQA &MMLU & IFEval & HHH \\   
        \midrule
        Qwen2.5-7B-Instruct Model &96.74		&44.83	&3.33	&73.50	&64.33	&88.77	\\
        \midrule
        w/o ours    &\textbf{100.00}	&90.48	&0.79	&72.00	&57.30	&88.34 	\\
        + 1 dim (Helpfulness)    &\textbf{100.00}	&\textbf{91.60}	&1.16	&\textbf{73.36}	&58.04 &87.50 \\
        + 1 dim (Truthfulness)         &\textbf{100.00}	&86.09	&\textbf{3.70}	&73.14	&62.85 &87.92 \\
        + 1 dim (Mixed) &99.81	&88.60	&3.49	&72.50	&\textbf{63.40} &\textbf{88.74} \\
        + 2 dim (Helpfulness$+$Truthfulness)   &\textbf{100.00}	&87.43	&3.61	&73.21	&63.03 &87.07 \\
        \bottomrule
    \end{tabular}
}
\vspace{-0.3cm}
\label{tab:base_direction_sft}
\end{table*}

\subsection{Generalization to the Mathematical Domain}

\begin{table}[h]
\caption{Performance comparison on mathematical benchmarks for Qwen2.5-7B-Instruct. Bold indicates the best performance within each alignment category.}
\label{tab:qwen-math-results}
\centering
\begin{small}
\begin{tabular}{lccc}
\toprule
\textbf{Method} & \textbf{GSM8K} & \textbf{AIME2024} & \textbf{AIME2025} \\
\midrule
SFT                & 89.61 & 8.82  & 6.08 \\
+ OGPSA (Ours)     & \textbf{90.83} & \textbf{9.22} & \textbf{6.47} \\
\midrule
DPO                & 90.74 & 11.96 & \textbf{9.41} \\
+ OGPSA (Ours)     & \textbf{91.21} & \textbf{12.35} & 8.82 \\
\bottomrule
\end{tabular}
\end{small}
\end{table}

\begin{table*}[h]
\centering
\caption{\textbf{Robustness to Swapping the Reference Set.} We investigate the alignment outcomes when entirely replacing the reference sets with 200 samples from GSM8K (a math-only dataset) during the SFT phase. The \textbf{best} results are marked in \textbf{bold}.}
\resizebox{\textwidth}{!}{
    \begin{tabular}{lcccccccccccc}
        \toprule
        \multirow{2}{*}{\textbf{Method}} & \multicolumn{12}{c}{\textbf{Benchmarks}} \\
        \cmidrule(lr){2-13}
         & XSTest & WildChat & Stereotype & StrongReject & SimpleQA & GPQA & MMLU & IFEval & HHH & AdvGLUE & AIME2024 & AIME2025 \\   
        \midrule
        SFT & \textbf{87.00} & 64.20 & \textbf{100.00} & 90.48 & 0.79 & \textbf{33.00} & 72.00 & 57.30 & 88.34 & 69.48 & 8.82 & 6.08 \\
        + OGPSA (w/ GSM8K) & 86.00 & \textbf{66.20} & \textbf{100.00} & \textbf{91.23} & \textbf{3.44} & 32.66 & \textbf{73.07} & \textbf{60.63} & \textbf{90.38} & \textbf{69.58} & \textbf{10.98} & \textbf{7.65} \\
        \bottomrule
    \end{tabular}
}
\vspace{-0.3cm}
\label{tab:gsm8k_ablation}
\end{table*}

\subsection{Impact of Sample Size Budgets for Gradient Estimation}

\begin{table*}[h]
\centering
\caption{\textbf{Robustness of gradient estimation to sample size budgets using DPO~\citep{rafailov2023direct} on Qwen.} We evaluate the performance stability as the number of samples used to estimate the reference capability gradients increases. Our method remains effective even with limited data budgets. The \textbf{best} results are marked in \textbf{bold}.
}
\resizebox{0.8\textwidth}{!}{
    \begin{tabular}{lccccccc}
        \toprule
\multirow{2}{*}{\textbf{Model}}
         & \multicolumn{2}{c}{\textbf{Safety ($\uparrow$)}} & \multicolumn{2}{c}{\textbf{Truthful ($\uparrow)$}} & \multicolumn{2}{c}{\textbf{Helpful ($\uparrow$)}}  \\
        \cmidrule(lr){2-3} \cmidrule(lr){4-5} \cmidrule(lr){6-7}
        & Stereotype   & StrongReject & SimpleQA &MMLU & IFEval & HHH  & \\    
        \midrule
        Qwen2.5-7B-Instruct Model &96.74		&44.83	&3.33	&73.50	&64.33	&88.77 \\
        \midrule
        w/o ours     &\textbf{100.00}	&\textbf{90.48}	&0.79	&72.00	&57.30	&88.34	\\
        + General Data (10k)       &99.04		&70.38	&3.47	&69.93	&53.42	&86.28 \\
        + w/ 50   &\textbf{100.00}		&86.87	&3.35	&71.71	&61.74	&89.16	  \\
        + w/ 100  &\textbf{100.00}		&85.61	&3.54	&72.43	&63.22	&88.73	     \\
        + w/ 150  &\textbf{100.00}		&87.37	&3.47	&\textbf{73.21}	&\textbf{63.03}	&\textbf{89.97}	  \\
        + w/ 200  &\textbf{100.00}		&87.43	&\textbf{3.61}	&\textbf{73.21}	&\textbf{63.03}	&87.07	 \\
        \midrule
    \end{tabular}
}
\vspace{-0.3cm}
\label{tab:data_sft}
\end{table*}

\subsection{Impact of Subspace Update Frequency using SFT}

\begin{table}[h]
\centering
\caption{Effect of subspace update frequency on optimization dynamics using SFT on Qwen. We compare static subspaces against dynamic updates at varying intervals. 
The \textbf{best} results are marked in \textbf{bold}.
}
\resizebox{0.8\textwidth}{!}{
    \begin{tabular}{lccccccc}
        \toprule
\multirow{2}{*}{\textbf{Model}}
         & \multicolumn{2}{c}{\textbf{Safety ($\uparrow$)}} & \multicolumn{2}{c}{\textbf{Truthful ($\uparrow)$}} & \multicolumn{2}{c}{\textbf{Helpful ($\uparrow$)}}  \\
        \cmidrule(lr){2-3} \cmidrule(lr){4-5} \cmidrule(lr){6-7}
        & Stereotype  & StrongReject & SimpleQA &MMLU & IFEval & HHH  & \\    
        \midrule
        Qwen2.5-7B-Instruct Model &96.74		&44.83	&3.33	&73.50	&64.33	&88.77 \\
        \midrule
        w/o ours    &\textbf{100.00}	&\textbf{90.48}	&0.79	&72.00	&57.30	&88.34	\\
        + w/ step 5   &\textbf{100.00}	&89.76	&\textbf{3.68}	&71.36	&61.92	&88.27  \\
        + w/ step 15   &\textbf{100.00}	&87.69	&3.56	&72.29	&\textbf{64.14}	&88.34 \\
        + w/ step 30   &\textbf{100.00}	&87.43	&3.61	&73.21	&63.03	&87.07 \\
        + w/ No updating &\textbf{100.00}	&89.04	&1.02	&\textbf{73.29}	&58.04	&\textbf{89.57} \\
        \bottomrule
    \end{tabular}
}
\vspace{-0.3cm}
\label{tab:frequency_sft}
\end{table}

\clearpage
\newpage
\section*{NeurIPS Paper Checklist}

\begin{enumerate}

\item {\bf Claims}
    \item[] Question: Do the main claims made in the abstract and introduction accurately reflect the paper's contributions and scope?
    \item[] Answer: \answerYes{} 
    \item[] Justification: The main claims made in the abstract and introduction accurately reflect the paper's contributions and scope. 
    \item[] Guidelines:
    \begin{itemize}
        \item The answer \answerNA{} means that the abstract and introduction do not include the claims made in the paper.
        \item The abstract and/or introduction should clearly state the claims made, including the contributions made in the paper and important assumptions and limitations. A \answerNo{} or \answerNA{} answer to this question will not be perceived well by the reviewers. 
        \item The claims made should match theoretical and experimental results, and reflect how much the results can be expected to generalize to other settings. 
        \item It is fine to include aspirational goals as motivation as long as it is clear that these goals are not attained by the paper. 
    \end{itemize}

\item {\bf Limitations}
    \item[] Question: Does the paper discuss the limitations of the work performed by the authors?
    \item[] Answer: \answerYes{} 
    \item[] Justification: the paper discuss the limitations in discussion section. 
    \item[] Guidelines:
    \begin{itemize}
        \item The answer \answerNA{} means that the paper has no limitation while the answer \answerNo{} means that the paper has limitations, but those are not discussed in the paper. 
        \item The authors are encouraged to create a separate ``Limitations'' section in their paper.
        \item The paper should point out any strong assumptions and how robust the results are to violations of these assumptions (e.g., independence assumptions, noiseless settings, model well-specification, asymptotic approximations only holding locally). The authors should reflect on how these assumptions might be violated in practice and what the implications would be.
        \item The authors should reflect on the scope of the claims made, e.g., if the approach was only tested on a few datasets or with a few runs. In general, empirical results often depend on implicit assumptions, which should be articulated.
        \item The authors should reflect on the factors that influence the performance of the approach. For example, a facial recognition algorithm may perform poorly when image resolution is low or images are taken in low lighting. Or a speech-to-text system might not be used reliably to provide closed captions for online lectures because it fails to handle technical jargon.
        \item The authors should discuss the computational efficiency of the proposed algorithms and how they scale with dataset size.
        \item If applicable, the authors should discuss possible limitations of their approach to address problems of privacy and fairness.
        \item While the authors might fear that complete honesty about limitations might be used by reviewers as grounds for rejection, a worse outcome might be that reviewers discover limitations that aren't acknowledged in the paper. The authors should use their best judgment and recognize that individual actions in favor of transparency play an important role in developing norms that preserve the integrity of the community. Reviewers will be specifically instructed to not penalize honesty concerning limitations.
    \end{itemize}

\item {\bf Theory assumptions and proofs}
    \item[] Question: For each theoretical result, does the paper provide the full set of assumptions and a complete (and correct) proof?
    \item[] Answer: \answerYes{} 
    \item[] Justification: the paper provide the full set of assumptions and a complete (and correct) proof for each theoretical result. 
    \item[] Guidelines:
    \begin{itemize}
        \item The answer \answerNA{} means that the paper does not include theoretical results. 
        \item All the theorems, formulas, and proofs in the paper should be numbered and cross-referenced.
        \item All assumptions should be clearly stated or referenced in the statement of any theorems.
        \item The proofs can either appear in the main paper or the supplemental material, but if they appear in the supplemental material, the authors are encouraged to provide a short proof sketch to provide intuition. 
        \item Inversely, any informal proof provided in the core of the paper should be complemented by formal proofs provided in appendix or supplemental material.
        \item Theorems and Lemmas that the proof relies upon should be properly referenced. 
    \end{itemize}

    \item {\bf Experimental result reproducibility}
    \item[] Question: Does the paper fully disclose all the information needed to reproduce the main experimental results of the paper to the extent that it affects the main claims and/or conclusions of the paper (regardless of whether the code and data are provided or not)?
    \item[] Answer: \answerYes{} 
    \item[] Justification: The paper fully disclose all the information needed to reproduce the main experimental results of the paper to the extent that it affects the main claims and/or conclusions of the paper. 
    \item[] Guidelines:
    \begin{itemize}
        \item The answer \answerNA{} means that the paper does not include experiments.
        \item If the paper includes experiments, a \answerNo{} answer to this question will not be perceived well by the reviewers: Making the paper reproducible is important, regardless of whether the code and data are provided or not.
        \item If the contribution is a dataset and\slash or model, the authors should describe the steps taken to make their results reproducible or verifiable. 
        \item Depending on the contribution, reproducibility can be accomplished in various ways. For example, if the contribution is a novel architecture, describing the architecture fully might suffice, or if the contribution is a specific model and empirical evaluation, it may be necessary to either make it possible for others to replicate the model with the same dataset, or provide access to the model. In general. releasing code and data is often one good way to accomplish this, but reproducibility can also be provided via detailed instructions for how to replicate the results, access to a hosted model (e.g., in the case of a large language model), releasing of a model checkpoint, or other means that are appropriate to the research performed.
        \item While NeurIPS does not require releasing code, the conference does require all submissions to provide some reasonable avenue for reproducibility, which may depend on the nature of the contribution. For example
        \begin{enumerate}
            \item If the contribution is primarily a new algorithm, the paper should make it clear how to reproduce that algorithm.
            \item If the contribution is primarily a new model architecture, the paper should describe the architecture clearly and fully.
            \item If the contribution is a new model (e.g., a large language model), then there should either be a way to access this model for reproducing the results or a way to reproduce the model (e.g., with an open-source dataset or instructions for how to construct the dataset).
            \item We recognize that reproducibility may be tricky in some cases, in which case authors are welcome to describe the particular way they provide for reproducibility. In the case of closed-source models, it may be that access to the model is limited in some way (e.g., to registered users), but it should be possible for other researchers to have some path to reproducing or verifying the results.
        \end{enumerate}
    \end{itemize}

\item {\bf Open access to data and code}
    \item[] Question: Does the paper provide open access to the data and code, with sufficient instructions to faithfully reproduce the main experimental results, as described in supplemental material?
    \item[] Answer: \answerYes{} 
    \item[] Justification: The paper provide open access to the data and code, with sufficient instructions to faithfully reproduce the main experimental results, as described in supplemental material. 
    \item[] Guidelines:
    \begin{itemize}
        \item The answer \answerNA{} means that paper does not include experiments requiring code.
        \item Please see the NeurIPS code and data submission guidelines (\url{https://neurips.cc/public/guides/CodeSubmissionPolicy}) for more details.
        \item While we encourage the release of code and data, we understand that this might not be possible, so \answerNo{} is an acceptable answer. Papers cannot be rejected simply for not including code, unless this is central to the contribution (e.g., for a new open-source benchmark).
        \item The instructions should contain the exact command and environment needed to run to reproduce the results. See the NeurIPS code and data submission guidelines (\url{https://neurips.cc/public/guides/CodeSubmissionPolicy}) for more details.
        \item The authors should provide instructions on data access and preparation, including how to access the raw data, preprocessed data, intermediate data, and generated data, etc.
        \item The authors should provide scripts to reproduce all experimental results for the new proposed method and baselines. If only a subset of experiments are reproducible, they should state which ones are omitted from the script and why.
        \item At submission time, to preserve anonymity, the authors should release anonymized versions (if applicable).
        \item Providing as much information as possible in supplemental material (appended to the paper) is recommended, but including URLs to data and code is permitted.
    \end{itemize}

\item {\bf Experimental setting/details}
    \item[] Question: Does the paper specify all the training and test details (e.g., data splits, hyperparameters, how they were chosen, type of optimizer) necessary to understand the results?
    \item[] Answer: \answerYes{} 
    \item[] Justification: The paper specify all the training and test details (e.g., data splits, hyperparameters, how they were chosen, type of optimizer) necessary to understand the results. 
    \item[] Guidelines:
    \begin{itemize}
        \item The answer \answerNA{} means that the paper does not include experiments.
        \item The experimental setting should be presented in the core of the paper to a level of detail that is necessary to appreciate the results and make sense of them.
        \item The full details can be provided either with the code, in appendix, or as supplemental material.
    \end{itemize}

\item {\bf Experiment statistical significance}
    \item[] Question: Does the paper report error bars suitably and correctly defined or other appropriate information about the statistical significance of the experiments?
    \item[] Answer: \answerNo{} 
    \item[] Justification: The paper does not report error bars of results. 
    \item[] Guidelines:
    \begin{itemize}
        \item The answer \answerNA{} means that the paper does not include experiments.
        \item The authors should answer \answerYes{} if the results are accompanied by error bars, confidence intervals, or statistical significance tests, at least for the experiments that support the main claims of the paper.
        \item The factors of variability that the error bars are capturing should be clearly stated (for example, train/test split, initialization, random drawing of some parameter, or overall run with given experimental conditions).
        \item The method for calculating the error bars should be explained (closed form formula, call to a library function, bootstrap, etc.)
        \item The assumptions made should be given (e.g., Normally distributed errors).
        \item It should be clear whether the error bar is the standard deviation or the standard error of the mean.
        \item It is OK to report 1-sigma error bars, but one should state it. The authors should preferably report a 2-sigma error bar than state that they have a 96\% CI, if the hypothesis of Normality of errors is not verified.
        \item For asymmetric distributions, the authors should be careful not to show in tables or figures symmetric error bars that would yield results that are out of range (e.g., negative error rates).
        \item If error bars are reported in tables or plots, the authors should explain in the text how they were calculated and reference the corresponding figures or tables in the text.
    \end{itemize}

\item {\bf Experiments compute resources}
    \item[] Question: For each experiment, does the paper provide sufficient information on the computer resources (type of compute workers, memory, time of execution) needed to reproduce the experiments?
    \item[] Answer: \answerYes{} 
    \item[] Justification: The paper provide sufficient information on the computer resources (type of compute workers, memory, time of execution) needed to reproduce the experiments. 
    \item[] Guidelines:
    \begin{itemize}
        \item The answer \answerNA{} means that the paper does not include experiments.
        \item The paper should indicate the type of compute workers CPU or GPU, internal cluster, or cloud provider, including relevant memory and storage.
        \item The paper should provide the amount of compute required for each of the individual experimental runs as well as estimate the total compute. 
        \item The paper should disclose whether the full research project required more compute than the experiments reported in the paper (e.g., preliminary or failed experiments that didn't make it into the paper). 
    \end{itemize}
    
\item {\bf Code of ethics}
    \item[] Question: Does the research conducted in the paper conform, in every respect, with the NeurIPS Code of Ethics \url{https://neurips.cc/public/EthicsGuidelines}?
    \item[] Answer: \answerYes{} 
    \item[] Justification: The research conducted in the paper conform, in every respect, with the NeurIPS Code of Ethics \url{https://neurips.cc/public/EthicsGuidelines}. 
    \item[] Guidelines:
    \begin{itemize}
        \item The answer \answerNA{} means that the authors have not reviewed the NeurIPS Code of Ethics.
        \item If the authors answer \answerNo, they should explain the special circumstances that require a deviation from the Code of Ethics.
        \item The authors should make sure to preserve anonymity (e.g., if there is a special consideration due to laws or regulations in their jurisdiction).
    \end{itemize}

\item {\bf Broader impacts}
    \item[] Question: Does the paper discuss both potential positive societal impacts and negative societal impacts of the work performed?
    \item[] Answer: \answerYes{} 
    \item[] Justification: The paper discuss both potential positive societal impacts and negative societal impacts of the work performed. 
    \item[] Guidelines:
    \begin{itemize}
        \item The answer \answerNA{} means that there is no societal impact of the work performed.
        \item If the authors answer \answerNA{} or \answerNo, they should explain why their work has no societal impact or why the paper does not address societal impact.
        \item Examples of negative societal impacts include potential malicious or unintended uses (e.g., disinformation, generating fake profiles, surveillance), fairness considerations (e.g., deployment of technologies that could make decisions that unfairly impact specific groups), privacy considerations, and security considerations.
        \item The conference expects that many papers will be foundational research and not tied to particular applications, let alone deployments. However, if there is a direct path to any negative applications, the authors should point it out. For example, it is legitimate to point out that an improvement in the quality of generative models could be used to generate Deepfakes for disinformation. On the other hand, it is not needed to point out that a generic algorithm for optimizing neural networks could enable people to train models that generate Deepfakes faster.
        \item The authors should consider possible harms that could arise when the technology is being used as intended and functioning correctly, harms that could arise when the technology is being used as intended but gives incorrect results, and harms following from (intentional or unintentional) misuse of the technology.
        \item If there are negative societal impacts, the authors could also discuss possible mitigation strategies (e.g., gated release of models, providing defenses in addition to attacks, mechanisms for monitoring misuse, mechanisms to monitor how a system learns from feedback over time, improving the efficiency and accessibility of ML).
    \end{itemize}
    
\item {\bf Safeguards}
    \item[] Question: Does the paper describe safeguards that have been put in place for responsible release of data or models that have a high risk for misuse (e.g., pre-trained language models, image generators, or scraped datasets)?
    \item[] Answer: \answerNA{} 
    \item[] Justification: The paper poses no such risks. 
    \item[] Guidelines:
    \begin{itemize}
        \item The answer \answerNA{} means that the paper poses no such risks.
        \item Released models that have a high risk for misuse or dual-use should be released with necessary safeguards to allow for controlled use of the model, for example by requiring that users adhere to usage guidelines or restrictions to access the model or implementing safety filters. 
        \item Datasets that have been scraped from the Internet could pose safety risks. The authors should describe how they avoided releasing unsafe images.
        \item We recognize that providing effective safeguards is challenging, and many papers do not require this, but we encourage authors to take this into account and make a best faith effort.
    \end{itemize}

\item {\bf Licenses for existing assets}
    \item[] Question: Are the creators or original owners of assets (e.g., code, data, models), used in the paper, properly credited and are the license and terms of use explicitly mentioned and properly respected?
    \item[] Answer: \answerYes{} 
    \item[] Justification: The creators or original owners of assets (e.g., code, data, models), used in the paper, properly credited and are the license and terms of use explicitly mentioned and properly respected. 
    \item[] Guidelines:
    \begin{itemize}
        \item The answer \answerNA{} means that the paper does not use existing assets.
        \item The authors should cite the original paper that produced the code package or dataset.
        \item The authors should state which version of the asset is used and, if possible, include a URL.
        \item The name of the license (e.g., CC-BY 4.0) should be included for each asset.
        \item For scraped data from a particular source (e.g., website), the copyright and terms of service of that source should be provided.
        \item If assets are released, the license, copyright information, and terms of use in the package should be provided. For popular datasets, \url{paperswithcode.com/datasets} has curated licenses for some datasets. Their licensing guide can help determine the license of a dataset.
        \item For existing datasets that are re-packaged, both the original license and the license of the derived asset (if it has changed) should be provided.
        \item If this information is not available online, the authors are encouraged to reach out to the asset's creators.
    \end{itemize}

\item {\bf New assets}
    \item[] Question: Are new assets introduced in the paper well documented and is the documentation provided alongside the assets?
    \item[] Answer: \answerNA{} 
    \item[] Justification: The paper does not release new assets. 
    \item[] Guidelines:
    \begin{itemize}
        \item The answer \answerNA{} means that the paper does not release new assets.
        \item Researchers should communicate the details of the dataset\slash code\slash model as part of their submissions via structured templates. This includes details about training, license, limitations, etc. 
        \item The paper should discuss whether and how consent was obtained from people whose asset is used.
        \item At submission time, remember to anonymize your assets (if applicable). You can either create an anonymized URL or include an anonymized zip file.
    \end{itemize}

\item {\bf Crowdsourcing and research with human subjects}
    \item[] Question: For crowdsourcing experiments and research with human subjects, does the paper include the full text of instructions given to participants and screenshots, if applicable, as well as details about compensation (if any)? 
    \item[] Answer: \answerNA{} 
    \item[] Justification: The paper does not involve crowdsourcing nor research with human subjects. 
    \item[] Guidelines:
    \begin{itemize}
        \item The answer \answerNA{} means that the paper does not involve crowdsourcing nor research with human subjects.
        \item Including this information in the supplemental material is fine, but if the main contribution of the paper involves human subjects, then as much detail as possible should be included in the main paper. 
        \item According to the NeurIPS Code of Ethics, workers involved in data collection, curation, or other labor should be paid at least the minimum wage in the country of the data collector. 
    \end{itemize}

\item {\bf Institutional review board (IRB) approvals or equivalent for research with human subjects}
    \item[] Question: Does the paper describe potential risks incurred by study participants, whether such risks were disclosed to the subjects, and whether Institutional Review Board (IRB) approvals (or an equivalent approval/review based on the requirements of your country or institution) were obtained?
    \item[] Answer: \answerNA{} 
    \item[] Justification: the paper does not involve crowdsourcing nor research with human subjects. 
    \item[] Guidelines:
    \begin{itemize}
        \item The answer \answerNA{} means that the paper does not involve crowdsourcing nor research with human subjects.
        \item Depending on the country in which research is conducted, IRB approval (or equivalent) may be required for any human subjects research. If you obtained IRB approval, you should clearly state this in the paper. 
        \item We recognize that the procedures for this may vary significantly between institutions and locations, and we expect authors to adhere to the NeurIPS Code of Ethics and the guidelines for their institution. 
        \item For initial submissions, do not include any information that would break anonymity (if applicable), such as the institution conducting the review.
    \end{itemize}

\item {\bf Declaration of LLM usage}
    \item[] Question: Does the paper describe the usage of LLMs if it is an important, original, or non-standard component of the core methods in this research? Note that if the LLM is used only for writing, editing, or formatting purposes and does \emph{not} impact the core methodology, scientific rigor, or originality of the research, declaration is not required.
    \item[] Answer: \answerYes{} 
    \item[] Justification: Large language models were used to polish the manuscript. The authors have thoroughly reviewed and edited all content and take full responsibility for the published work. 
    \item[] Guidelines:
    \begin{itemize}
        \item The answer \answerNA{} means that the core method development in this research does not involve LLMs as any important, original, or non-standard components.
        \item Please refer to our LLM policy in the NeurIPS handbook for what should or should not be described.
    \end{itemize}

\end{enumerate}

\end{document}